\newcommand{\squishlist}{
  \begin{list}{$\bullet$}
    { \setlength{\itemsep}{0pt}      \setlength{\parsep}{3pt}
      \setlength{\topsep}{3pt}       \setlength{\partopsep}{0pt}
      \setlength{\leftmargin}{1.5em} \setlength{\labelwidth}{1em}
      \setlength{\labelsep}{0.5em} } }
\newcommand{\squishend}{
  \end{list}  }
\begin{document}

\title{LLM-based Few-Shot Early Rumor Detection with Imitation Agent}

\author{Fengzhu Zeng}
\authornote{Both authors contributed equally to this research.}
\email{fzzeng.2020@phdcs.smu.edu.sg}
\orcid{https://orcid.org/0009-0003-6623-5494}
\author{Qian Shao}
\authornotemark[1]
\email{qianshao.2020@phdcs.smu.edu.sg}
\orcid{https://orcid.org/0000-0003-4396-9754}
\affiliation{%
  \institution{Singapore Management University}
  \country{Singapore}
}

\author{Ling Cheng}
\affiliation{%
  \institution{Singapore Management University}
  \country{Singapore}}
  \orcid{https://orcid.org/0000-0002-2834-9728}
\email{lingcheng.2020@phdcs.smu.edu.sg}

\author{Wei Gao}
\affiliation{%
  \institution{Singapore Management University}
  \country{Singapore}
}
\orcid{https://orcid.org/0000-0003-2028-2407}
\email{weigao@smu.edu.sg}

\author{Shih-Fen Cheng}
\affiliation{%
 \institution{Singapore Management University}
 \country{Singapore}}
 \orcid{https://orcid.org/0000-0001-9398-7892}
\email{sfcheng@smu.edu.sg}

\author{Jing Ma}
\affiliation{%
  \institution{Hong Kong Baptist University}
  \country{Hong Kong}}
\orcid{https://orcid.org/0000-0002-7464-8331}
\email{majing@hkbu.edu.hk}

\author{Cheng Niu}
\affiliation{%
  \institution{Particle Media, Inc}
  \city{Mountain View}
  \state{California}
  \country{USA}}
\email{cheng.niu@newsbreak.com}

\renewcommand{\shortauthors}{Fengzhu Zeng et al.}

\begin{abstract}
Early Rumor Detection (EARD) aims to identify the earliest point at which a claim can be accurately classified based on a sequence of social media posts. This is especially challenging in data-scarce settings. While Large Language Models (LLMs) perform well in few-shot NLP tasks, they are not well-suited for time-series data and are computationally expensive for both training and inference. In this work, we propose a novel EARD framework that combines an autonomous agent and an LLM-based detection model, where the agent acts as a reliable decision-maker for \textit{early time point determination}, while the LLM serves as a powerful \textit{rumor detector}. This approach offers the first solution for few-shot EARD, necessitating only the training of a lightweight agent and allowing the LLM to remain training-free. Extensive experiments on four real-world datasets show our approach boosts performance across LLMs and surpasses existing EARD methods in accuracy and earliness.\footnote{Code is released at \url{https://github.com/znhy1024/Few-Shot-EARD}.}
\end{abstract}

\begin{CCSXML}
<ccs2012>
   <concept>
       <concept_id>10010147.10010178.10010179</concept_id>
       <concept_desc>Computing methodologies~Natural language processing</concept_desc>
       <concept_significance>500</concept_significance>
       </concept>
 </ccs2012>
\end{CCSXML}

\ccsdesc[500]{Computing methodologies~Natural language processing}


\keywords{Early rumor detection; LLMs; imitation learning; few-shot}


\maketitle

\section{Introduction}
The spread of false rumors can have serious societal consequences, leading to widespread concerns. For example, in July 2024, a brutal knife attack occurred in Southport, UK. Within less than two hours, rumors began circulating on social media, falsely accusing the attacker of being a Muslim immigrant, sparking one of the most violent riots in UK history\footnote{\url{https://apnews.com/article/britain-riots-unrest-social-media-misinformation-attack-5824d3136675e10d6a25c9e17287c994}}. Despite government efforts to control the situation, the damage caused by such misinformation had already caused significant harm, highlighting the critical need for EArly Rumor Detection (EARD) approaches to ensure timely identification of rumorous claims. 

EARD aims to automatically determine an \emph{early} time point in a sequence of social posts related to an unverified claim, at which the prediction regarding whether that claim is a rumor shall be \emph{accurate}~\cite{zeng-gao-2022-early}. Specifically, the model continuously monitors a stream of posts, and at each time step it must assess whether the current moment is suitable for making an accurate prediction (i.e., classifying the claim as a rumor or non-rumor) based on the observed social posts. If it determines that such a decision can be made, the model outputs its prediction and terminates further observation; otherwise, it continues monitoring the stream.

The EARD task is challenging yet has received limited attention. Existing EARD methods typically rely on  recurrent neural networks (RNNs) for the rumor detection module, and the early time point prediction module often employs deep reinforcement learning~\cite{zhou-etal-2019-early}, fixed probability thresholds~\cite{song2019ced}, or the neural Hawkes Process~\cite{zeng-gao-2022-early}. These methods tightly couple the two modules for joint training in a recurrent manner, necessitating extensive annotated data for effective model training while lacking the flexibility to easily replace either module with a more powerful alternative.

New events on social media constantly emerge, while there could be rarely or even no annotated claims for them and each claim tends to have limited number of relevant posts available in the early stage of propagation. Meanwhile, the evolving nature of social conversations about rumorous events requires advanced text understanding capabilities for a detector.
Annotating training data under such dynamics is very costly~\citep{saakyan-etal-2021-covid}. Moreover, waiting for lots of posts to accumulate before making decision (e.g., for a intervention) can inadvertently allow rumors to spread even further. This dilemma highlights the critical need for effective EARD methods in data-limited scenarios, a challenge that existing approaches struggle to address.

Recently, Large Language Models (LLMs) have demonstrated promising capabilities in understanding social media text with limited data across various tasks~\citep{10.1145/3543873.3587605, zhang-etal-2024-sentiment, yang-etal-2024-reinforcement,Lan_Gao_Jin_Li_2024,10459901,roy-etal-2023-probing,liu2024largelanguagemodelsdetect}. However, LLMs face significant limitations when applied to the EARD task, as they are not inherently well-suited for handling time series data or representing sequential dependencies of such data~\cite{LLM-TS} while training LLMs for dealing with such data requires intensive resources. 
Additionally, in the context of EARD, the cost of inference can become prohibitively high because the model must generate predictions at every time step along the sequence of posts, with the context length increasing over time, further compounding computational overhead and latency. 
This raises an important question: Given the strong language understanding capabilities of LLMs, how can we efficiently and effectively utilize them for few-shot EARD?

In this work, we present a novel  framework for EARD in data-limited scenarios,  providing an agile and cost-effective solution. Our framework consists of two components: a lightweight autonomous neural agent and an LLM.
At a high level, the neural agent serves as a reliable decision-maker for determining the optimal early time point by continuously monitoring the stream of social media posts, tracking the evolution of such social conversations about a claim. Meanwhile, the LLM functions as a powerful rumor detector, analyzing the observed posts at the early time point determined by the neural agent. This framework not only leverages the LLM's strong few-shot capabilities in text understanding but also decouples the training of the early time point prediction module from the rumor detection module. As a result, only the lightweight agent requires training, allowing the LLM to remain training-free and reducing its inference frequency to a single prediction, which can remarkably lower the demand for intensive computation. 

We first model EARD as a Markov Decision Process (MDP), where the observed posts represent the state, the action involves deciding whether to trigger the LLM for rumor detection. The agent aims to maximize expected returns, reflecting its ability to make early and accurate detection. However, the expected return depends on an effective reward function that can be highly challenging to design in a complex, real-world setting~\cite{ng2000algorithms}. In EARD, reward for determining the optimal early time point is especially difficult to quantify, relying on the sufficiency of observed information and the capability of the detector. 
To address this, we adopt imitation learning (IL), which enables the agent to learn optimal policy from a few expert trajectories, ultimately imitating the expert's decision-making process. IL has been successfully applied to many sequential decision-making problems~\citep{kuefler2017imitating,hussein2017imitation,ingimundardottir2018discovering,fang2019survey}, and in our case, it allows us to bypass the need for explicitly defining complex reward functions. Specifically, we design three types of strategies to curate expert trajectories based on the LLM's predictions at each time step along the sequence: the first two focus on state-action pairs that lead to early, stable, and correct predictions, and the third captures the state-action pairs where the LLM fails to make a correct prediction even when considering all available posts.
Consequently, we obtain three sets of state-action pairs, and the distribution of each set is referred to as the \textit{occupancy measure}, which accounts for both expert's policy and sequence dynamics. Our goal is to minimize the distance between the occupancy measure of the agent and those of the first two experts, while maximizing the divergence from the third expert. This is to ensure that the agent learns an optimal policy for better generalization by effectively imitating correct prediction patterns and avoiding the behaviors represented by incorrect predictions.  

Our main contributions are four-fold: 
    \begin{itemize}[leftmargin=*]
       \item We propose the first few-shot EARD method by introducing a framework that learns a lightweight autonomous neural agent for optimal time determination together with a training-free LLM-based rumor detector, offering a cost-effective solution. 
        \item We model EARD as an MDP, design three types of expert trajectories based on the predictions of LLMs over sequences of posts, and train the lightweight agent to effectively imitate correct prediction patterns while learning to avoid behaviors leading to incorrect predictions. 
        \item We theoretically prove that our proposed method can lead to an optimal policy for early, stable, and accurate detection.
        \item Extensive experiments on four real-world rumor detection datasets demonstrate the effectiveness of our proposed method across various LLMs, including Mistral~\cite{Mistral}, Llama 3~\cite{llama3}, ChatGPT~\cite{ChatGPT}. Our approach also outperforms existing EARD methods in terms of detection accuracy and earliness metrics.  
    \end{itemize}

\section{Related Work}
Many studies in rumor detection assert that their models can be generally adopted for early detection by simply feeding them data observed up to a predetermined checkpoint that could be either a specific time point or number of observed posts~\citep{ma2016detecting,yu2017convolutional,ma2017detect,ma-etal-2018-rumor,guo2018rumor,bian2020rumor,Lin_Yi_Ma_Jiang_Luo_Shi_Liu_2023}. Yet, such methods do not consider how to determine an optimal detection point. Typically, the checkpoints are chosen manually and coarse-grained. It is infeasible to determine an optimal early detection point among these checkpoints, since the decision requires referring to data points or model outputs beyond the current decision point (e.g., for the stability of prediction), thus delaying detection. Some other methods claim to be designed for early detection by integrating temporal factor into their classifier~\citep{zhao2015enquiring, nguyen2017early, wu2017gleaning, xia-etal-2020-state}, but they still lack mechanisms to enforce earliness and cannot automatically establish an optimal detection point. 

There are several \textit{full-shot} EARD methods that is capable of automatically deciding whether to stop or continue at a checkpoint. A method called ERD~\cite{zhou-etal-2019-early} used deep reinforcement learning with a manually designed reward function to encourage the model to focus on early time intervals during training.
~\citet{song2019ced} proposed another EARD method named Credible Detection Point (CED), which used a fixed probability threshold to determine whether to halt the detection process based on the credibility of the current prediction. Later, HEARD~\cite{zeng-gao-2022-early} highlighted the instability and low confidence of these models, noticing that they fail to account for the uncertainty of future predictions during training. HEARD addressed this by employing the neural Hawkes Process~\cite{mei-etal-2017-The} to construct a detection stability distribution based on a sequence of prior and current predictions, which allows the model to automatically determine a detection point at which predictions in future time steps remain unchanged. However, such cautiousness can lead to delayed detection as the decision is pending for achieving stabilization, while the noisy posts during rumor propagation can easily disturb the stabilization process.
Also, these methods are full-shot, requiring extensive annotated data for effective training, which makes them unsuitable for data-limited scenarios. In contrast, our few-shot method is specifically designed for the situation where only a few labeled instances are available, thereby aligning with real-world data limitations especially in EARD.

\section{Problem Definition} \label{problem definition}
Let $\mathcal{C} = \{C\}$ denote a set of instances, where each $C=\{M, y\}$ consists of a set of relevant posts in chronological order about an event $M = \{(\mathbf{m}_0, t_0), \ldots, (\mathbf{m}_i, t_i), \ldots, (\mathbf{m}_{|M|}, t_{|M|}) \}$ and the ground-truth label $y \in \{0, 1\}$ indicating $C$ is a rumor if $y = 1$ or a non-rumor otherwise. $|M|$ is the number of relevant posts in $M$ and each tuple $(\mathbf{m}_i, t_i) \in M$ includes the text content $\mathbf{m}_i$ and the timestamp $t_i$ of the $i$-th post ($t_{i} \leq t_{i+1}$). 
We define the EARD task as automatically determining the \textit{earliest} time point  \( \hat{t} \in \{ t_0,..,t_{|M|} \} \), such that for a given event the prediction \( \hat{y} \in \{ \text{non-rumor}, \text{rumor} \} \) at \( \hat{t} \) is \textit{accurate}. In the few-shot setting, we randomly sample $K$ instances from the training set. We do not assume the availability of a development set, as this aligns to a more realistic scenario with limited data~\cite{lee-etal-2021-towards,zeng-gao-2023-prompt,zeng-gao-2024-justilm}.

\begin{figure*}
    \includegraphics[width=1.\linewidth]{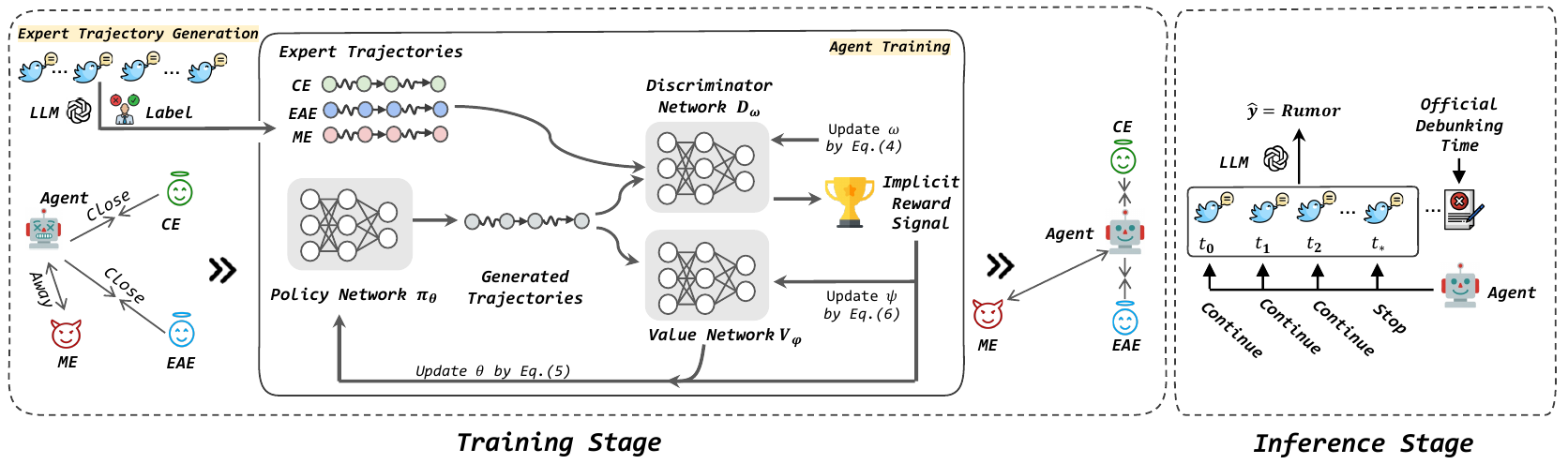}
\caption{llustration of the proposed EARD framework. During training, three types of expert trajectories are generated based on the LLM's predictions and the label. We utilize imitation learning to train the agent to find an optimal policy $\pi$ that aligns itself with CE and EAE while moves away from ME. At Inference, the trained agent automatically determines an early time point for the LLM to perform rumor detection. CE: Conservative Expert; EAE: Early-Action Expert; ME: Misleading Expert.
}
    \label{fig: pipeline}
\end{figure*}

\section{Methodology}
Figure \ref{fig: pipeline} shows the proposed framework, comprising a lightweight agent for automatic early time point determination and an LLM for rumor detection. Only the lightweight agent requires training, while we keep the LLM detector training-free. This framework decouples early time point prediction from rumor detection, allowing the LLM to utilize its powerful text understanding capabilities without heavy computational demands. The goal of the EARD task is to achieve timely and accurate detection, relying on both the observed posts and the LLM's detection abilities. Thus, the agent aims to learn the complex dynamics of LLM's prediction behaviors in response to different posts over time. By doing so, the framework leverages the complementary strengths of proactive timing control and reactive content evaluation to improve early rumor detection performance.

\subsection{MDP Formulation for EARD}\label{MDP Formulation}
We model the EARD task as a Markov Decision Process (MDP), represented by the tuple $\mathcal{M} = (\mathcal{S}, \mathcal{A}, \mathcal{P},\mathcal{R})$. Here, $\mathcal{S}$ denotes the state space, $\mathcal{A}$ is the action space, $\mathcal{P}$ represents the transition function, and $\mathcal{R}$ is the reward function. Each state $s_t\in\mathcal{S}$ consists of all observed posts up to the current step $t$ along with the previous action. 
Each action $ a_t \in \mathcal{A}$ is binary, comprising either \textit{continue} (to keep observing the stream of posts, denoted as $0$) or \textit{stop} (to terminate observation and trigger the LLM to make a prediction, denoted as $1$). 
The transition function $\mathcal{P}$ is deterministic: given the current state $s_t=(\mathbf{m}_{0:t},a_{t-1})$ and the action $a_t$, the environment transitions to the next state $s_{t+1}=(\mathbf{m}_{0:t+1},a_t)$. The reward function $r(s, a) \in \mathcal{R}$ reflects the immediate reward for state-action pairs, and ${\bar{\gamma}}\in (0, 1)$ is the discount factor. Let $\Pi$ represent the set of all stationary stochastic policies that select actions from $\mathcal{A}$ based on states in $\mathcal{S}$. 
We operate within a ${\bar{\gamma}}$-discounted infinite horizon setting, using the expectation with respect to a policy $\pi \in \Pi$ to denote the expected return over trajectories: $\mathbb{E}_{\pi}[r(s, a)] \triangleq \mathbb{E}_{\pi}\left[\sum_{t=0}^{\infty} {\bar{\gamma}}^t r(s_t, a_t)\right],$ where $s_0 \sim p_0$, $a_t \sim \pi(\cdot \mid s_t)$, and $s_{t+1} \sim \mathcal{P}(\cdot \mid s_t, a_t)$ for $t \geq 0$. Here, $p_0$ is the initial state distribution. Notably, the process forcibly terminates when the state contains the last post in the timeline.

\subsection{Imitation Learning} 
In the MDP formulation for EARD, the agent aims to find an optimal policy $\pi^{*}$ that maximizes the expected return, which is fundamentally influenced by the reward function measuring how early and accurate detection can be achieved. Reinforcement Learning (RL) methods typically rely on well-defined, observable reward function to guide the agent’s actions. However, creating an effective reward function can be extremely challenging in many real-world scenarios~\cite{ng2000algorithms}, including EARD. 

In EARD, determining the early time point relies on both the observed information and the detector's capability, both of which are difficult to quantify. Manually specified reward functions may be sparse or misaligned with true task objectives, making it difficult for the agent to learn an optimal policy and potentially leading to unintended behaviors that might cause delayed predictions~\cite{zeng-gao-2022-early}. Learning a reward function can be costly, requiring large labeled datasets and substantial computational resources to train predefined neural networks~\cite{ho2016generative,RLHF-cons}.  

To address these challenges, we utilize imitation learning (IL) to train an autonomous agent, offering a practical alternative that has been successfully applied in many sequential decision-making problems~\citep{kuefler2017imitating,hussein2017imitation,ingimundardottir2018discovering,fang2019survey}.
IL is particularly advantageous in environments where designing an explicit reward function is difficult or infeasible~\cite{ng2000algorithms}, as it enables the agent to learn from expert demonstrations without the need for an explicitly predefined reward function. IL can achieve good performance with fewer data samples~\cite{bain1995framework}, making it more efficient compared to RL that often requires extensive exploration. Moreover, by directly mimicking the decision-making process of the expert, IL typically converges faster to effective policies than RL which relies on trial-and-error exploration to discover optimal actions~\cite{ross2011reduction}. In the IL setting, we are provided with a limited number of expert trajectories from experts. These trajectories consist of state-action pairs that illustrate the actions taken by the expert in response to various states in the environment. Our goal is to learn a policy for the agent that imitates the expert's behavior.

\vspace{-1ex}
\subsubsection{Trajectories Generation} \label{Trajectories Generation} 
As the LLM continuously observes the stream of posts, it makes predictions $\hat{y}_i$ (i.e., rumor or non-rumor) at each time step $t_i$. Once the observation is terminated, a  sequence of predictions $\hat{y}_0,\hat{y}_1,..., \hat{y}_{|M|}$ of length $|M|$ is generated. By comparing these predictions with the ground-truth label $y$, we can construct expert trajectories for the post sequence, ultimately identifying an early time point for outputting a prediction. 

We generate expert trajectories based on LLM's prediction sequence, allowing the expert's policy to implicitly consider both the observed posts and the LLM's detection capability. This process is flexible enough to incorporate the expert's preferences by adjusting generation strategies considering the task. Hence, we design multiple type of experts for early time point determination, focusing on earliness, stability and accuracy. 
Specifically, let the early time point selected by the expert $E$ for an instance with sequence length $|M|$ be denoted as $t_{i^*}$ (where $0 \leq i^*\leq |M|$). An expert trajectory for this expert is represented as $\tau_{E} = \{(s_0,a_0),...,(s_{i^*},a_{i^*}),...(s_{|M|},a_{|M|})\}$, where $s_j \in \mathcal{S}$ and $a_j \in \mathcal{A}$; $a_j=0$ for $0\leq j < i^*$ and $a_j=1$ for $i^* \leq j\leq |M|$. Next, we introduce three types of experts.

\textit{Conservative Expert (CE):} 
Inspired by \citet{zeng-gao-2022-early}, this expert determines the early time point $t_{i^*}$, at which the prediction for a claim is accurate and remains unchanged over time. This means there are no reversals in the prediction after $t_{i^*}$; formally, $t_{i^*}=\text{min}\{t_{k}\;|\; \hat{y}_{k}=\hat{y}_{j}=y, \text{for all}\; j,\; k <j\leq |M|, k \in \{0,\ldots,|M|\}\}$. The stop action is taken when predictions are stable without further changes. However, this cautious approach can delay detection, as it is always likely that the model might be misled to change its prediction by irrelevant or distracting posts during rumor propagation. 

\textit{Early-Action Expert (EAE):} 
This expert builds on the CE's strategy but takes a more proactive approach. It selects the first time point, at which the rumor detector's prediction aligns with the ground truth, regardless of subsequent predictions. Formally, $t_{i^*}=\text{min}\{t_{k}\;|\; \hat{y}_{k}=y, k \in \{0,\ldots,|M|\}\}$. Intuitively, this expert allows for the earliest possible prediction but might sacrifice  stability in some extent. 

\textit{Misleading Expert (ME):}
This expert addresses scenarios where the LLM generates incorrect predictions when all available posts in the entire sequence have been considered.
This expert captures behaviors that lead to such incorrect decisions, with $t_{i^*}=\text{min}\{t_{k}\;|\; \hat{y}_{k} \neq y \; \text{and} \; \hat{y}_{k} = \hat{y}_{j}, \; \text{for all}\; j,\; k <j\leq |M|, \; k \in \{0,\ldots,|M|\}\}$. By incorporating this expert, the agent learns how to avoid misleading behaviors. 

Basically, our agent is trained to not only effectively imitate the \textit{CE} for early and stable predictions, but also adopt a more proactive manner to early time point determination through \textit{EAE}, and meanwhile avoid misleading behaviors by learning from the \textit{ME}. This synthesis of decision-making strategies from different experts enables the agent to balance diverse experts' behaviors and develop a more comprehensive and robust policy. As a result, the agent can flexibly adjust its strategy to  ensure accuracy and timeliness.

\subsubsection{Objective Function} \label{Objective Function}

\begin{figure}
    \vspace{-0ex}
    \includegraphics[width=0.48\textwidth]{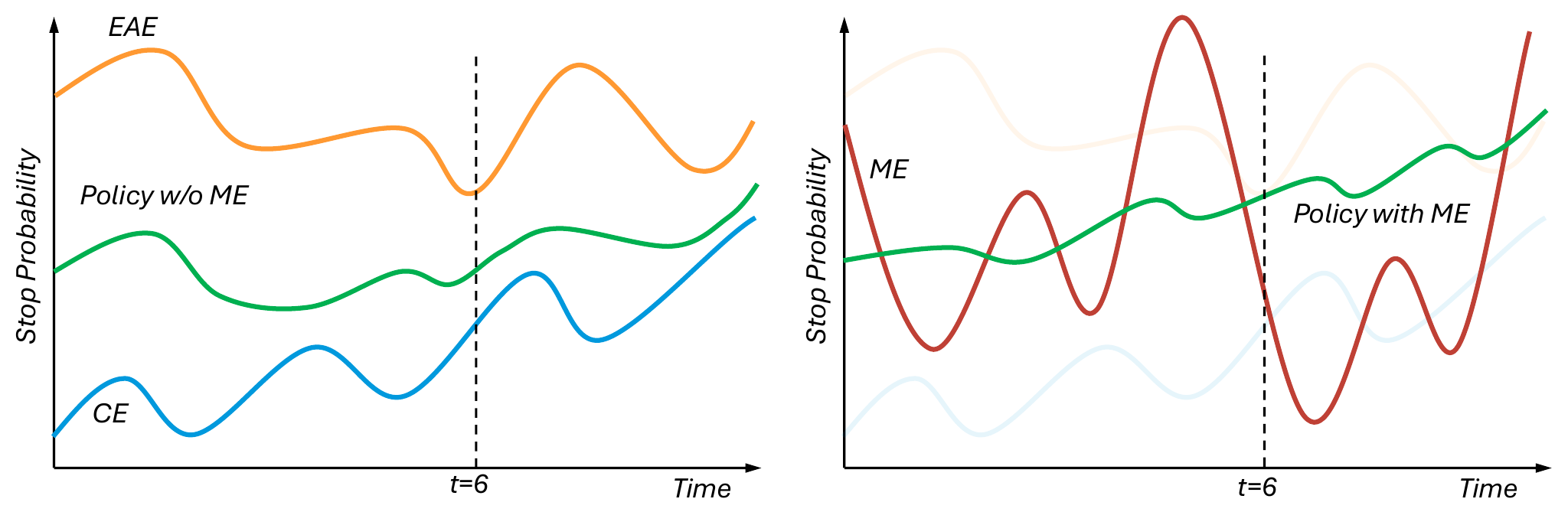}
\caption{An intuitive illustration of probability of stop action for CE, EAE, and ME over time. When the agent aligns itself with EAE and CE, it may hover around $0.5$ at some time points (e.g., $t=6$), leading to random choices. By including ME, the agent is encouraged to move away from the state-action distribution of ME, thereby increasing the stop probability.}
    \label{fig:MTE example}
\end{figure}
With our trajectory generation strategies, we obtain three sets of expert trajectories $\Phi_{E}=\{\tau_{E}\sim \pi_{E}\}$ for training, where $E = \{c, e, m\}$ correspond to \textit{CE}, \textit{EAE}, and \textit{ME}, respectively. These trajectories consist of state-action pairs that demonstrate the expert policy $\pi_{E}$ in the given environment. The agent is expected to learn an optimal policy that synthesizes the policies of different experts, i.e., learning from desirable expert behaviors while suppressing those from misleading ones. To achieve this, we employ generative adversarial imitation learning~\cite{ho2016generative}, a widely used imitation learning algorithm that frames the imitation learning problem as matching the occupancy measures between the expert and agent policies. The occupancy measure reflects the distribution of state-action pairs and accounts for both the policy and the environment dynamics. Specifically, the occupancy measure of $\pi$ is denoted as $\rho_{\pi} \in \Gamma$,  defined as $\rho_{\pi} (s, a) = \pi(a|s) \sum_{t=0}^\infty \bar{\gamma}^t \mathcal{P}(s_t = s|\pi)$~\cite{puterman2014markov}, where $\Gamma$ is the set of occupancy measures associated with all stationary stochastic policies $\Pi$.

Since there is an one-to-one correspondence between $\Pi$ and $\Gamma$~\cite{syed2008apprenticeship}, learning a policy $\pi$ can be framed as a matching problem between the occupancy measure of the agent policy $\rho_\pi$ and that of the expert policy $\rho_{\pi_E}$~\cite{ho2016generative}. Therefore, in our case, the agent learns a policy $\pi$ whose occupancy measure $\rho_\pi$ minimizes the divergence between $\rho_{\pi_{c}}$ and $\rho_{\pi_{e}}$, while maximizing the divergence from $\rho_{\pi_{m}}$. 
The objective function is defined as follows:
\begin{equation}
\begin{aligned}
\min_{\pi} & \left(-\lambda H(\pi) + \alpha \psi^*(   \rho_{\pi} - \rho_{\pi_{c}}) + \beta \psi^*( \rho_{\pi} - \rho_{\pi_{e}})\right. \\
& \left.-\gamma \psi^*( \rho_{\pi} - \rho_{\pi_{m}})\right),
\end{aligned}
\label{first obj func}
\end{equation}
where $\psi^*$ represents the Jensen-Shannon divergence and $H(\pi) \triangleq \mathbb{E}_\pi[-\log \pi(a|s)]$ is the causal entropy of the policy $\pi$. This entropy term encourages exploration and helps prevent the policy from collapsing into a deterministic strategy. The parameters $\lambda$, $\alpha$, $\beta$, and $\gamma$ control the influence of causal entropy and the distances between the occupancy measures of the agent and different experts. 
In experiments, we assign a larger weight to $\alpha$ to prioritize CE trajectories, which allows the model to leverage the earliness of EAE while remaining anchored by the stability of CE, offering timely and accurate predictions instead of being excessively cautious or aggressive. And we ensure that EAE and ME trajectories contribute equally, preventing the policy from deviating too far in either direction for a more balanced learning process.

This objective function ensures that the learned policy incorporates elements from all three expert policies, as intuitively illustrated in Figure~\ref{fig:MTE example}. The \textit{EAE} begins with a high probability of taking a stop action, while the \textit{CE} starts with a lower initial probability that gradually increases due to its cautious nature. The agent aims to align itself with both \textit{EAE} and \textit{CE}. 
However, at certain points (e.g., time $t=6$ in the Figure \ref{fig:MTE example}), the agent may become uncertain about which decision to make, leading to random choices. Such undesired behaviors can result in incorrect actions, such as stopping too early or continuing unnecessarily. By factoring in the \textit{ME} policy, the agent is encouraged to deviate from the state-action distribution of \textit{ME}. Consequently, during uncertain time points, the agent’s probability of stopping will not hover around $0.5$, and will favor either the continue or stop action more decisively.  We also provided the theoretical justification for this objective function based on the generative adversarial imitation learning framework in the Appendix~\ref{Proof of objective func}.

Following~\citet{ho2016generative}, $\psi^*$ is defined as:
\begin{equation}
\label{GAIL distance measure}
\begin{aligned}
& \psi^*(\rho_\pi - \rho_{\pi_E}) = \max \limits_{D} \left\{\mathbb{E}_\pi[\log D(s,a)] + \mathbb{E}_{\pi_E}[\log(1-D(s,a))]\right\},
\end{aligned}
\end{equation}
where $D \in (0,1)^{\mathcal{S} \times \mathcal{A}}$ is the discriminative classifiers responsible for distinguishing between state-action pairs of agent policy and those of the expert. This equation represents the optimal negative log loss of this binary classification problem of differentiating agent and expert trajectories. This optimal loss is the Jensen-Shannon divergence $JSD(\rho_\pi, \rho_{\pi_E})$~\cite{ho2016generative}, which provides a distance metric between occupancy measures. The agent aims to minimize the divergence between its state-action distribution and those of \textit{CE} and \textit{EAE}, while maximizing the divergence from \textit{ME}.

\subsubsection{Training} \label{Training}
To solve the objective function in Eq.~\eqref{first obj func}, following~\citet{ho2016generative}, our agent consists of three neural networks: a policy network $\pi_\theta$ parameterized by $\theta$, a discriminator network $D_\omega$ parameterized by $\omega$, and a value network $V_{\phi}$ parameterized by $\phi$. During training, the agent alternates between collecting trajectories using the current policy and updating the policy via Proximal Policy Optimization (PPO) method \cite{schulman2017proximal}. The discriminator distinguishes expert trajectories from agent-generated ones and provides an implicit reward signal. Since PPO method requires estimating the advantage of each action~\cite{schulman2017proximal}, the value network serves as the critic and estimates the state-value function $V_{\phi}(s)$, stabilizing policy updates. 

 With Eq.~\eqref{first obj func} and Eq.~\eqref{GAIL distance measure}, the objective function can be written as:

\begin{equation}
\label{ME-GAIL-obj}
\begin{aligned}
L(\omega, \theta) & \doteq \min_{\theta} \max_{\omega} \left\{ - \lambda H(\pi_\theta) + (\alpha + \beta - \gamma ) \mathbb{E}_{\pi_\theta} \left[ \log D_\omega(s, a) \right] \right. + \\ 
& \quad \alpha \mathbb{E}_{\pi_{c}} \left[ \log(1 - D_\omega(s, a)) \right] + \beta \mathbb{E}_{\pi_{e}} \left[ \log(1 - D_\omega(s, a)) \right] \\
& \quad \left. -\gamma \mathbb{E}_{\pi_{m}} \left[ \log(1 - D_\omega(s, a)) \right]  \right\}
\end{aligned}
\end{equation}

Our approach aims to compute a saddle point $(\theta, \omega)$ for the above objective function, by updating the parameters of policy and discriminator sequentially \cite{achiam2017constrained,shao2024imitating,satija2020constrained}:

\textbf{Updating $\omega$:}
The gradient of Eq.~\eqref{ME-GAIL-obj} with respect to $\omega$ is calculated as:
\begin{equation}
\label{gradient of discriminator}
\begin{aligned}
 & \bigtriangledown_\omega L(\omega,\theta) = (\alpha + \beta - \gamma )\mathbb{E}_{\pi_\theta}[\bigtriangledown_{\omega}\log D_\omega(s,a)] \\
& + \alpha \mathbb{E}_{\pi_{c}}[\bigtriangledown_{\omega} \log(1-D_\omega(s,a))]   \\
& + \beta \mathbb{E}_{\pi_{e}}[\bigtriangledown_{\omega} \log(1-D_\omega(s,a))] -  \gamma 
 \mathbb{E}_{\pi_{m}}[\bigtriangledown_{\omega} \log(1-D_\omega(s,a))]
 \end{aligned}
\end{equation}
We use Adam optimizer~\cite{kingma2014adam} to update the variable $\omega$, targeting the maximization of Eq. \eqref{ME-GAIL-obj} with respect to the discriminator $D_\omega$.

\textbf{Updating $\theta$:}
We use PPO to update policy network parameters $\theta$. PPO-clip updates policy via
$ \theta_{k+1} = \arg \max_{\theta} \, \mathbb{E}_{s, a \sim \pi_{\theta_k}} \left[ \mathcal{L}(s, a, \theta_k, \theta) \right]$.
The objective function $\mathcal{L}$ is defined as: 
\begin{equation}
\label{PPO-clip}
\begin{aligned}
 &\mathcal{L}(s, a, \theta_k, \theta) = \\ 
 & \min \left( \frac{\pi_{\theta}(a|s)}{\pi_{\theta_k}(a|s)} A^{\pi_{\theta_k}}(s, a), \, \text{clip} \left( \frac{\pi_{\theta}(a|s)}{\pi_{\theta_k}(a|s)}, 1 - \epsilon, 1 + \epsilon \right) A^{\pi_{\theta_k}}(s, a) \right),
\end{aligned}
\end{equation}
where $\epsilon$ is a small hyperparameter that limits how far the new policy $\pi_{\theta}$ can deviate from the old policy $\pi_{\theta_k}$, and $A^{\pi_{\theta_k}}$ is the Advantage that measures the difference between the Q-value and the value function $V_{\phi}$ parameterized by $\phi$. Since no explicit reward function is available to compute the Advantage $A^{\pi_{\theta_k}}$, we use the output of the discriminator, $-\log D_\omega(s,a)$, as the implicit reward signal. We then apply Generalized Advantage Estimation (GAE)~\cite{schulman2015high} to compute $A^{\pi_{\theta_k}}(s, a)$.

\textbf{Updating $\phi$:}
The value network parameters $\phi$ is updated by minimizing the mean-squared error of the value function, using the following objective: 
\begin{equation}
\label{gradient of reward, cost value }
\begin{aligned}
&  \min \limits_{\phi} \mathop{\mathbb{E}} \limits_{s_t \sim \pi_{\theta_k}} (V_{\phi}(s_t)-\hat{R}_t)^2, \\
\end{aligned}
\end{equation}
where $\hat{R}_t$ represents the reward-to-go, which is also computed using the GAE method~\cite{schulman2015high}.

Initially, the policy parameters $\theta$, value network parameters $\phi$, and discriminator network parameters $\omega$ are randomly initialized. The agent alternates among collecting trajectories, updating the policy using PPO, and refining both the value network and discriminator through gradient-based updates. This iterative process continues until convergence, resulting in an optimal policy $\pi_\theta$ that effectively synthesizes the policies of different experts. By integrating different type of expert trajectories, our approach provides a flexible and effective framework for early rumor detection, particularly in scenarios where the optimal strategy requires balancing diverse experts behaviors. Training algorithm is in Algorithm~\ref{Algo: ME-GAIL}.

\subsection{Inference}
After training, the trained policy network $\pi_\theta$ is directly deployed for early time point prediction. Note that expert trajectory generation are required only for training. During inference, the agent continuously monitors social media posts, deciding whether to stop observation based on the posts available up to that time point through the policy network. When it determines to stop, the LLM is activated to detect rumors using the observed posts.

\begin{table*}[t!]
    \small
    \centering
     \begin{adjustbox}{width={1.\linewidth},keepaspectratio}%
        \begin{tabular}{lccccccccccc}
        
        \toprule[1.0pt]
          & \multicolumn{2}{c}{\textbf{PHEME}} && \multicolumn{2}{c}{\textbf{TWITTER}} && \multicolumn{2}{c}{\textbf{BEARD}} && \multicolumn{2}{c}{\textbf{Twitter-COVID-19}} \\ 
          \cline{2-3}\cline{5-6}\cline{8-9}\cline{11-12}
          & \textbf{macro-F1} & \textbf{ER} && \textbf{macro-F1} & \textbf{ER} && \textbf{macro-F1} & \textbf{ER} && \textbf{macro-F1} & \textbf{ER} \\ 
        \midrule[0.5pt]
        \multirow{2}{*}{$\textbf{Llama3}_{\textit{(first post)}}$}
                               & \multirow{2}{*}{$0.457_{(0.016)}$} & \multirow{2}{*}{-} && \multirow{2}{*}{$0.593_{(0.029)}$} & \multirow{2}{*}{-} && \multirow{2}{*}{$0.505_{(0.016)}$} & \multirow{2}{*}{-} && \multirow{2}{*}{$0.378_{(0.023)}$} & \multirow{2}{*}{-} \\
        \specialrule{0em}{4pt}{1pt}
        \multirow{2}{*}{$\textbf{Mistral}_{\textit{(first post)}}$}
                               & \multirow{2}{*}{$0.571_{(0.024)}$} & \multirow{2}{*}{-} && \multirow{2}{*}{$0.543_{(0.028)}$} & \multirow{2}{*}{-} && \multirow{2}{*}{$0.443_{(0.033)}$} & \multirow{2}{*}{-} && \multirow{2}{*}{$0.493_{(0.027)}$} & \multirow{2}{*}{-} \\
        \specialrule{0em}{4pt}{1pt}     
         \multirow{2}{*}{$\textbf{ChatGPT}_{\textit{(first post)}}$}
                               & \multirow{2}{*}{${0.604}_{(0.013)}$} & \multirow{2}{*}{-} && \multirow{2}{*}{$0.535_{(0.031)}$} & \multirow{2}{*}{-} && \multirow{2}{*}{$0.445_{(0.028)}$} & \multirow{2}{*}{-} && \multirow{2}{*}{${0.578}_{(0.015)}$} & \multirow{2}{*}{-} \\
                          
        \specialrule{0em}{4pt}{1pt}   
        \midrule[0.5pt]
        \multirow{2}{*}{$\textbf{ERD}$}
         & \multirow{2}{*}{$0.487_{(0.105)}$} & \multirow{2}{*}{$1.000_{(0.000)}$} && \multirow{2}{*}{$0.571_{(0.017)}$} & \multirow{2}{*}{$1.000_{(0.000)}$} && \multirow{2}{*}{${\mathbf{0.572}_{(0.101)}}$} & \multirow{2}{*}{$1.000_{(0.000)}$} && \multirow{2}{*}{$0.528_{(0.059)}$} & \multirow{2}{*}{$1.000_{(0.000)}$} \\
         \specialrule{0em}{4pt}{1pt}
         \multirow{2}{*}{$\textbf{CED}$}
         & \multirow{2}{*}{$0.453_{(0.041)}$} & \multirow{2}{*}{$0.978_{(0.011)}$} && \multirow{2}{*}{$0.654_{(0.044)}$} & \multirow{2}{*}{$0.617_{(0.040)}$} && \multirow{2}{*}{${0.543_{(0.088)}}$} & \multirow{2}{*}{$0.553_{(0.041)}$} && \multirow{2}{*}{$0.524_{(0.070)}$} & \multirow{2}{*}{$0.421_{(0.067)}$} \\
         \specialrule{0em}{4pt}{1pt}
        \multirow{2}{*}{$\textbf{HEARD}$}
         & \multirow{2}{*}{$0.501_{(0.038)}$} & \multirow{2}{*}{$0.266_{(0.006)}$} && \multirow{2}{*}{$0.627_{(0.037)}$} & \multirow{2}{*}{$0.647_{(0.305)}$} && \multirow{2}{*}{${0.553_{(0.044)}}$} & \multirow{2}{*}{$0.337_{(0.017)}$} && \multirow{2}{*}{$0.530_{(0.068)}$} & \multirow{2}{*}{$\mathbf{0.175}_{(0.013)}$} \\
         \specialrule{0em}{4pt}{1pt}
         \midrule[0.5pt]
        
        \multirow{2}{*}{\textbf{Ours (w/ Llama3)}}
                               & \multirow{2}{*}{\textbf{$ {  0.478_{(0.029)}}$}} & \multirow{2}{*}{$0.283_{(0.156)}$} && \multirow{2}{*}{\textbf{$ { \mathbf{0.705}_{(0.039)}}$}} & \multirow{2}{*}{$0.530_{(0.405)}$} && \multirow{2}{*}{\textbf{$ {0.546_{(0.031)}}$}} & \multirow{2}{*}{$0.493_{(0.367)}$} && \multirow{2}{*}{\textbf{${0.544_{(0.032)}}$}} & \multirow{2}{*}{$0.846_{(0.007)}$} \\

        \specialrule{0em}{4pt}{1pt}
       
        \multirow{2}{*}{\textbf{Ours (w/ Mistral)} }
                               & \multirow{2}{*}{\textbf{${0.590_{(0.012)}}$}} & \multirow{2}{*}{$0.276_{(0.125)}$} && \multirow{2}{*}{\textbf{${0.630_{(0.039)}}$}} & \multirow{2}{*}{$\mathbf{0.043}_{(0.038)}$} && \multirow{2}{*}{\textbf{${0.488_{(0.030)}}$}} & \multirow{2}{*}{$\mathbf{0.130}_{(0.073)}$} && \multirow{2}{*}{\textbf{${0.511_{(0.056)}}$}} & \multirow{2}{*}{${0.184}_{(0.365)}$} \\   
        \specialrule{0em}{7pt}{1pt}

        \multirow{1}{*}{\textbf{Ours (w/ ChatGPT)}}
                               & \multirow{1}{*}{\textbf{${\mathbf{0.646}_{(0.009)}}$}} & \multirow{1}{*}{$\mathbf{0.205}_{(0.043)}$} && \multirow{1}{*}{\textbf{${{0.702}_{(0.068)}}$}} & \multirow{1}{*}{${0.526}_{(0.402)}$} && \multirow{1}{*}{\textbf{${0.546_{(0.045)}}$}} & \multirow{1}{*}{$0.545_{(0.279)}$} && \multirow{1}{*}{\textbf{${\mathbf{0.586}_{(0.031)}}$}} & \multirow{1}{*}{$0.366_{(0.285)}$} \\ 

        \bottomrule[1.0pt]
        \end{tabular}
    \end{adjustbox}
  \caption{Results of existing EARD methods, LLMs using the first post (\textit{with Earliest Rumor Detection Strategy}), and our framework integrated with base LLMs across four datasets. Standard deviation is in (.). The best results are in \textbf{bold}.}
  \label{tab:main results}
  \vspace{-3ex}
\end{table*}

\begin{table*}[ht!]
    \small
    \centering
    \begin{adjustbox}{width={1.\linewidth},keepaspectratio}%
        \begin{tabular}{l|c|cccccccc}
        
        \toprule[1.0pt]
        {\textbf{Training Dataset}} 
        & &\textbf{ERD} & &\textbf{CED} & &\textbf{HEARD} & \textbf{Ours (w/ Llama3)} & \textbf{Ours (w/ Mistral)} & \textbf{Ours (w/ ChatGPT)} \\ 
        
        \midrule[0.5pt]   

        \multirow{2}{*}{\color{gray}{\textbf{Twitter-COVID-19}}} & \color{gray}{\textbf{macro-F1}} &  \color{gray}{$0.528_{(0.059)}$} & & \color{gray}{$0.524_{(0.070)}$} && \color{gray}{$0.530_{(0.068)}$} & \color{gray}{$0.544_{(0.032)}$}  & \color{gray}{$0.511_{(0.056)}$}  & \color{gray}{$0.586_{(0.031)}$} \\
        \specialrule{0em}{1pt}{1pt}
        \cline{2-10}
        \specialrule{0em}{1pt}{1pt}
        & \color{gray}{\textbf{ER}} & \color{gray}{$1.000_{(0.000)}$} && \color{gray}{$0.421_{(0.067)}$} && \color{gray}{$0.175_{(0.013)}$} & \color{gray}{$0.846_{(0.007)}$} & \color{gray}{$0.184_{(0.365)}$}  & \color{gray}{$0.366_{(0.285)}$} \\
        \midrule[0.5pt]   
        
        \multirow{3}{*}{\textbf{PHEME}} & \textbf{macro-F1} &  $0.316_{(0.099)}$ && $0.394_{(0.087)}$ && $0.341_{(0.059)}$ & $0.442_{(0.046)}$ & $0.497_{(0.034)}$ & $\mathbf{0.521}_{(0.020)}$ \\
        \specialrule{0em}{1pt}{1pt}
        \cline{2-10}
        \specialrule{0em}{1pt}{1pt}
        & \textbf{ER} & $1.000_{(0.000)}$ && $0.311_{(0.041)}$ && $0.025_{(0.003)}$ & $0.086_{(0.114)}$ & $0.030_{(0.022)}$ & $\mathbf{0.018}_{(0.006)}$ \\
        \midrule[0.5pt]   
        
        \multirow{3}{*}{\textbf{TWITTER}} & \textbf{macro-F1} &  $0.473_{(0.037)}$ && $0.482_{(0.055)}$ && $0.480_{(0.040)}$ & $0.461_{(0.092)}$ & $0.499_{(0.021)}$ & $\mathbf{0.546}_{(0.018)}$ \\
        \specialrule{0em}{1pt}{1pt}
        \cline{2-10}
        \specialrule{0em}{1pt}{1pt}
        & \textbf{ER} & $1.000_{(0.000)}$ & &$0.378_{(0.054)}$ && $0.641_{(0.272)}$ & $0.186_{(0.368)}$ & $\mathbf{0.020}_{(0.017)}$ & $0.434_{(0.390)}$ \\
        
        \bottomrule[1.0pt]
        \end{tabular}
    \end{adjustbox}
  \caption{Cross-dataset evaluation results on Twitter-COVID-19 test set, with models trained on the PHEME and TWITTER datasets (both collected prior to the COVID-19 pandemic). Standard deviation is in (.). The best results are in \textbf{bold}.}
  \label{tab:exp2}
  \vspace{-4ex}
\end{table*}

\section{Experiments and Results}
\begin{algorithm}[!t]
\caption{}
\label{Algo: ME-GAIL}
\begin{flushleft}
\noindent \textbf{Input}: Initial parameters of policy $\theta$, value network $\phi$, discriminator network $\omega$, time steps per iteration $K$, a set of expert trajectories $\Phi_E = \{\tau_E \sim \pi_E\}, E = \{c, e, m\}$,  entropy parameter $\lambda$, learning rates $\alpha_r,\alpha_\omega$. \\
\textbf{Output}: Optimal policy $\pi_\theta$ 
\end{flushleft}
\begin{algorithmic}[1] 
\FOR {$k = 1,2,...$}
\STATE Collect a set of learner's trajectories $\Phi_k = \{\tau_i \}$ by running policy $\pi_{\theta_k}$ for $K$ time steps.
\STATE Collect the implicit reward signal $r_t$ of $K$ time steps by using the discriminator output: $r_t = -\log (D_\omega(s_t,a_t))$ 
\STATE Compute $V_{\phi}(s_t)$ of $K$ time steps.
\STATE Compute the advantage $A^{\pi_{\theta_k}}$, reward-to-go $\hat{R}_t$  of $K$ time steps by using GAE.
\STATE Update policy by using PPO-Clip: 
\[
\theta' = \arg \max_{\theta} \frac{1}{K} \sum_{t=1}^K \mathcal{L}(s_t, a_t, \theta_k, \theta)
\]
\STATE Update value network:
\[
\phi' \gets \phi - \frac{1}{K}  \sum_{t=1}^K \alpha_r \nabla_{\phi} (V_{\phi}(s_t)-\hat{R}_t)^2 
\]

\STATE Update discriminator network: 
\[
\omega' \gets \omega -  \frac{1}{K} \sum_{t=1}^K \alpha_\omega \left( \bigtriangledown_\omega -L(\omega,\theta) \right)
\]
\STATE $ \theta \gets \theta'$, $\phi \gets \phi'$, $\omega \gets \omega'$.
\ENDFOR 
\end{algorithmic}
\end{algorithm}

\subsection{Experiments Setup}
\subsubsection{Datasets.}
Four real-world datasets are used: 1) \textbf{PHEME}~\cite{zubiaga2016learning}: This dataset collects posts from conversation threads during newsworthy events; 2) \textbf{TWITTER}~\cite{ma2016detecting}: This dataset provides posts of events sourced from a fact-checking website, widely used for rumor detection studies; 3) \textbf{Twitter-COVID-19}~\cite{lin-etal-2022-detect}: This dataset gathers posts from conversation threads discussing events about the COVID-19 pandemic~\cite{DBLP:journals/corr/abs-2010-06906}; 4) \textbf{BEARD}~\cite{zeng-gao-2022-early}: this dataset is for EARD task, focusing on covering early-stage posts relevant to various events. 
Each dataset offers a chronological collection of relevant posts for each event, annotated as either rumor or non-rumor.

\subsubsection{Baselines.} \label{baselines}
We compare with existing full-shot EARD methods under few-shot setup, including ERD~\cite{zhou2019early} that uses a Deep Q-Network (DQN) to enforce the model to focus on early posts; CED~\cite{song2019ced} that uses a fixed probability threshold to check if the prediction is credible for determining early detection point; HEARD~\cite{zeng-gao-2022-early} that automatically determines a detection point at which predictions in future time steps remain unchanged using the neural Hawkes Process~\cite{mei-etal-2017-The}. 
Given the strong capability of LLMs across NLP tasks, we also include Mistral~\cite{Mistral}, Llama3~\cite{llama3}\footnote{We refer to Mistral-Instruct-v0.3 as Mistral and Llama-3-Instruct as Llama3 for brevity.}, ChatGPT~\cite{ChatGPT} as baselines. We incorporate these LLMs using two early detection strategies. \emph{Earliest Rumor Detection Strategy}: The LLMs are prompted to make predictions using only the \textit{first} post of each instance, following~\citet{miao2021syntax}. \emph{Preset Time Checkpoints Strategy}: we use a set of preset time checkpoints at $\{1\text{h}, 6\text{h}, 12\text{h}, 24\text{h}, 36\text{h}\}$. All posts published before each checkpoint are provided to the LLMs for prediction. The timestamp of the first post serves as the starting point, and the time elapsed for each subsequent post is calculated relative to this initial timestamp. This strategy is commonly used in general rumor detection studies for evaluating early detection performance~\citep{ma2016detecting,yu2017convolutional,ma2017detect,ma-etal-2018-rumor,guo2018rumor,bian2020rumor,Lin_Yi_Ma_Jiang_Luo_Shi_Liu_2023}. 

\subsubsection{Experimental Settings.}
We randomly sample 50 labeled training instances for trajectory generation and 100 \textit{unlabeled} instances to serve as the environment for the agent, allowing the agent to dynamically explore and adapt its strategy in an online policy learning setting. We utilize the pretrained BERTweet model~\cite{nguyen-etal-2020-bertweet} to obtain embeddings of each state. Following~\citet{ho2016generative}, we employ two hidden layers with a \textit{Tanh} activation function for the policy, value, and discriminator networks. The coefficient of causal entropy $\lambda$ is set to 0.01. We set the expert ratios to $\alpha=0.7$, $\beta=0.15$, and $\gamma=0.15$ as discussed in \S \ref{Objective Function}. 

We utilize the pretrained Mistral with 7B parameters\footnote{\url{https://huggingface.co/mistralai/Mistral-7B-Instruct-v0.3}} and Llama3 with 8B parameters\footnote{We choose open-source LLMs under the limits of our computing resources. Scaling up to larger models would impose huge computational costs, as each experiment requires five runs. For example, a baseline experiment using the Preset Time Checkpoints Strategy needs 25 runs for one dataset.}. For ChatGPT, we use the API service from openAI\footnote{\url{https://openai.com/index/gpt-4o-mini-advancing-cost-efficient-intelligence/}}.
As mentioned in \S \ref{problem definition}, we do not consider a development set to reflect a realistic few-shot setting, ensuring consistent settings across all models and datasets. We use the original source codes released by the baseline EARD methods. 

We report macro-F1 score for classification and use Early Rate (ER)~\cite{song2019ced} to measure the proportion of posts used for detection. ER measures the proportion of posts used for detection:
$ER=\frac{1}{|\mathbf{C}_\text{test}|} 
\sum_{C \in \mathbf{C}}\frac{i_{C}}{|C|}$,
where $\mathbf{C}_\text{test}$ represents the test set, $i_{C}$ indicates the early detection decision made at the $i$-th post in instance $C$, and $|C|$ is the total number of posts in that instance. \emph{Lower} ER means that the model can detect rumors \emph{earlier}. For a fair comparison, we use the same prompt template for all models and run each experiment five times with different random seeds, reporting the mean and standard deviation for each metric. The prompt used for LLM prediction is ``\textit{Analyze the given sequence of social media posts, determine if it is a rumor. Respond Yes or No only.
Posts: [the list of observed posts in chronological order]}''.

Table \ref{experiment hyperparameters} presents hyper-parameters in our experiments. Following~\citet{ho2016generative}, the policy, value, and discriminator networks are initialized randomly. We set the number of training steps to 200,000, with a batch size of 64 for the discriminator network and 4 for the policy and value networks.

\begin{table}[hbt]
    \centering
    \caption{Hyper-parameters in experiments}
    \begin{tabular}{lc}
        \toprule
        \small hyper-parameter & value \\
        \midrule
       Policy and Value network hidden size & (64, 64) \\
       Activation & Tanh \\
       Time steps per iteration $K$  & 200 \\
       Total time steps & 200,000 \\
       Generator epoch & 4 \\
       Discriminator epoch & 5\\
       Generalized Advantange Estimation $\bar{\gamma}$ & 0.99\\
       Generalized Advantange Estimation $\lambda_0$ & 0.97\\
       Learning rate( Value network) & 3 $\times$  $10 ^ {-4}$ \\
       Policy entropy coefficient $\lambda $& 0.01 \\
       Clip parameter $\epsilon$ & 0.1 \\
       \bottomrule
    \end{tabular}
   \label{experiment hyperparameters}
\end{table}

\begin{figure*}
    \includegraphics[width=.9\textwidth]{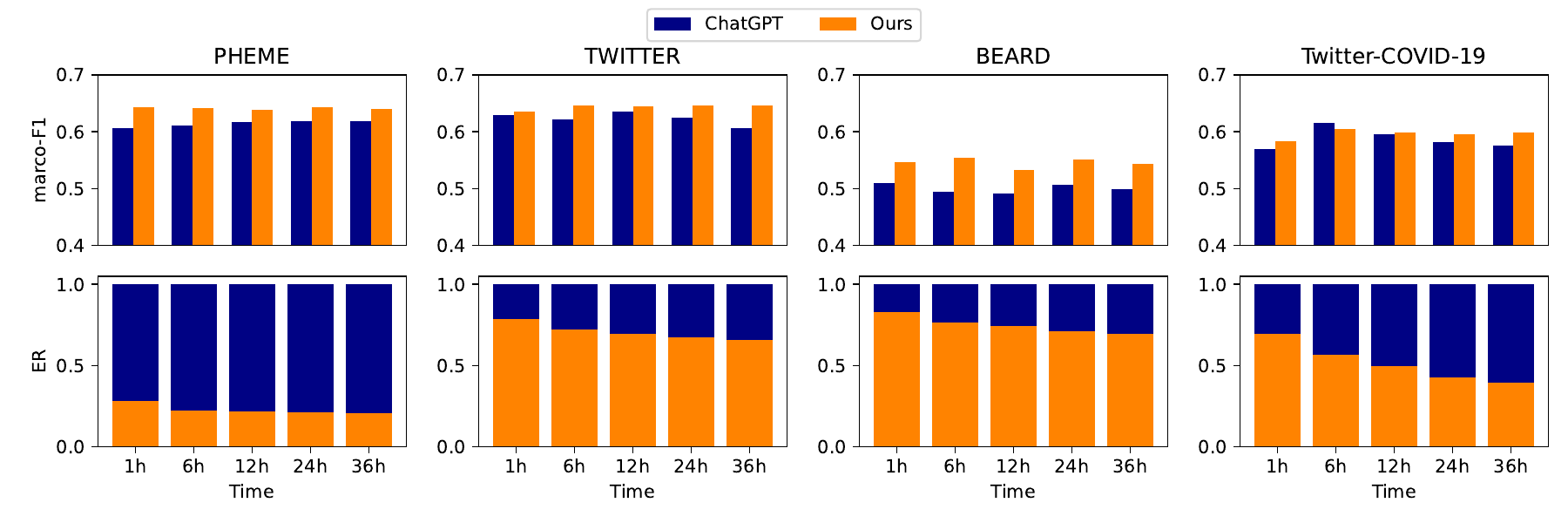}
    \vspace{-5ex}
\caption{Comparison of ChatGPT (\textit{with Preset Time Checkpoints Strategy}) and our method. The x-axis indicates different time intervals of posts used for prediction. The top row displays macro-F1 scores, while the bottom row shows the ER.}
    \label{fig:cttoff across gpt}
    \vspace{-2ex}
\end{figure*}

\subsection{Results and Analysis}
\subsubsection{Main Results.}\label{sec:main results}
In Tables~\ref{tab:main results} and \ref{tab:exp2}, we firstly present the results of existing EARD methods, LLMs using the earliest rumor detection strategy, and LLMs integrated with our proposed framework. We have the following observations.

\textbf{First,} as shown in Table~\ref{tab:main results}, despite using only first post following the earliest rumor detection strategy~\cite{miao2021syntax}, LLMs achieve competitive or superior macro-F1 scores across multiple datasets, compared to existing EARD methods. Specifically, ChatGPT outperforms all existing EARD methods by up to 33.3\% and 10.3\% on PHEME and Twitter-COVID-19, respectively. Even smaller LLMs like Mistral surpass all existing EARD methods on PHEME, and Llama3 outperforms ERD on TWITTER dataset. These results highlight the potential of LLMs used as detectors in EARD, particularly in data-scarce scenarios.

\textbf{Second,} our framework effectively enhances the performance of base LLMs, by utilizing our lightweight agent that automatically determines an early detection point. By continuously monitoring the social media post stream and capturing the evolving dynamics of rumors, the agent enables base LLMs to leverage richer information for more accurate predictions. In Table~\ref{tab:main results}, we observe that the relative improvements are 8.2\% for Mistral, 15.6\% for ChatGPT, and 18.9\% for Llama3, averaged across four datasets. These consistent improvements across different LLMs, suggest our framework is effective, robust and model-agnostic.

\textbf{Third,} our method achieves overall better performance across both ER and F1, compared to existing EARD methods. For instance, in Table~\ref{tab:main results}, on the TWITTER dataset, our framework enables Mistral to surpass all EARD baselines in F1 score while achieving a significantly lower ER (0.043) compared to ERD (1.000), CED (0.617), and HEARD (0.647). We also observe that ERD fails to make early decisions in few-shot setting, as shown by its high ER scores. This is likely due to its reliance on a manually designed reward function, where its policy trained by DQN is highly sensitive. Specifically, the reward function applies only a small penalty for continuation but imposes a large penalty for incorrect early termination~\cite{zhou-etal-2019-early}, discouraging the model to stop early. This behavior aligns with previous findings~\cite{zeng-gao-2022-early}, and the problem is further exacerbated in the few-shot scenario, where data scarcity undermines the stability and effectiveness of reinforcement learning.

\textbf{Lastly,} we examine how well existing EARD methods and our method generalize to \textit{unseen new event data} by training the models using PHEME and TWITTER datasets (collected prior to COVID-19) and test them on the Twitter-COVID-19. In Table~\ref{tab:exp2}, we observe a general performance drop when training on the PHEME and TWITTER compared to training directly on the Twitter-COVID-19 dataset. 
However, under this more challenging setting, our method demonstrates stronger performance in both detection accuracy and earliness compared to all EARD baselines. And the overall performance degradation of baselines is more substantial than that of our approach. 
For instance, while Ours (w/ Mistral) slightly underperforms ERD, CED, and HEARD when trained directly on Twitter-COVID-19, it surpasses all of them in F1 score and achieves a lower ER when trained on PHEME and TWITTER. Specifically, ERD, CED, and HEARD suffer relative F1 drops of 67.8\%, 33.1\%, and 55.4\% when trained on PHEME, and relative 11.6\%, 8.7\%, and 10.4\% drops when trained on TWITTER. In contrast, Ours (w/ Mistral) incurs only 2.8\% and 2.4\% degradation under the same settings. These results highlight the superior generalization capability of our approach in data-limited scenarios, emphasizing its robustness in detecting rumors from previously unseen events.

\subsubsection{Comparison with the LLMs using Preset Time Checkpoints Strategy.} 
Instead of only using the first post, this strategy provides all posts published before each time checkpoint to the LLMs for detection. We conduct this experiment based on ChatGPT, given its best overall performance across the four datasets. As shown in Figure~\ref{fig:cttoff across gpt}, our method consistently improves the detection performance of ChatGPT, while achieving a lower ER. For example, on the Twitter-COVID-19 dataset, ChatGPT’s ER is reduced by more than 50\%, dropping to below $0.5$ at the 12-hour checkpoint with our agent. This shows that, with the automatic early time point determination provided by our agent, the LLMs can achieve better early detection performance with far fewer posts. Our imitation learning algorithm enables the agent to effectively mimic the \textit{CE} and \textit{EAE}, while also learning to avoid the misleading behaviors of \textit{ME}, suggesting that learning from multiple experts allows for both earlier and more accurate predictions. Moreover, the improvement on the EARD-oriented BEARD dataset looks greater than those on the other rumor datasets, indicating that our method effectively leverages early-stage information, which makes it particularly well-suited for the EARD task. We also provide the results of other LLMs in \S~\ref{appendix:Preset Time Checkpoints}, 
observing similar trends. 

\begin{figure}[ht!]
\includegraphics[width=1.0\linewidth]{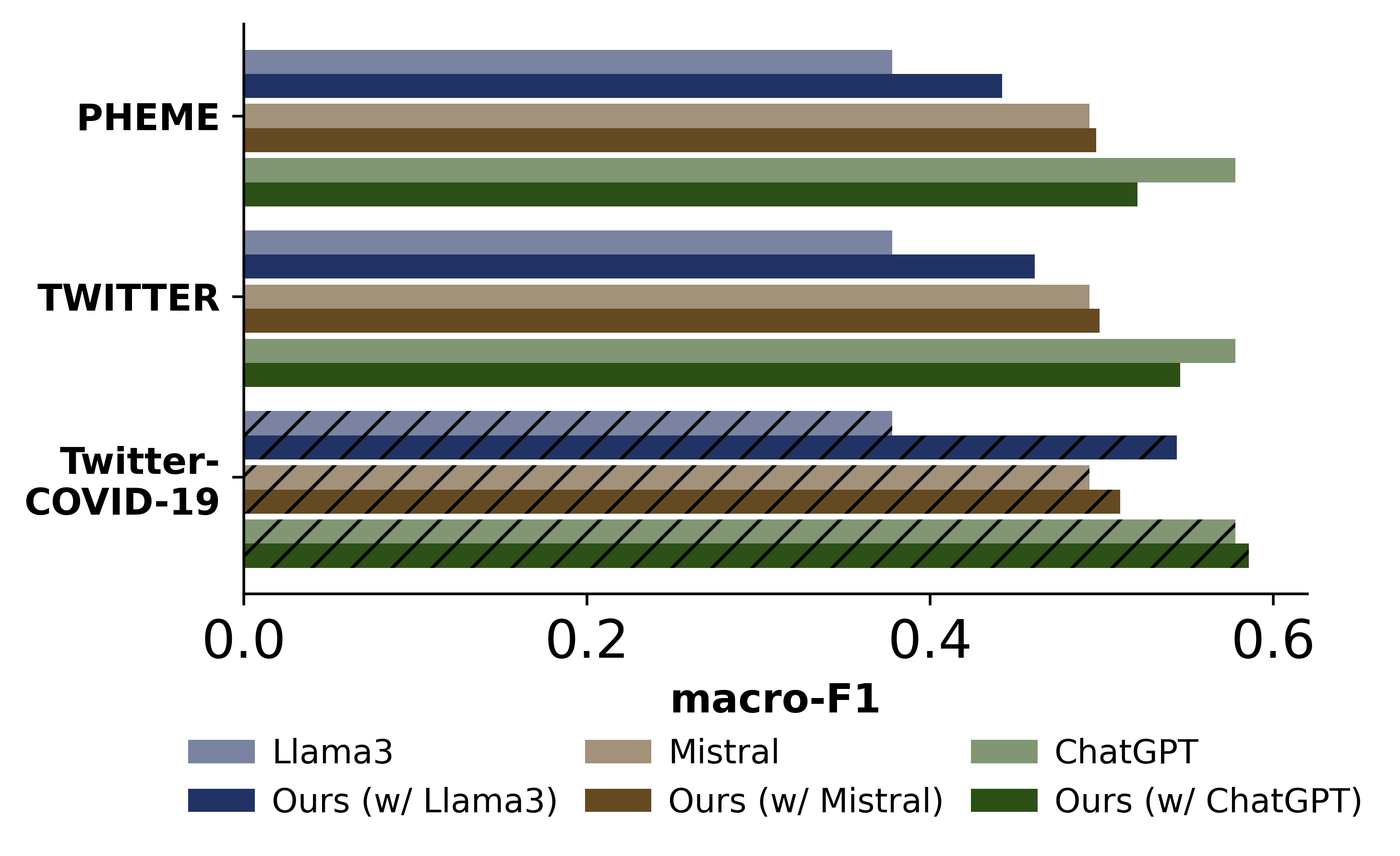}
\caption{Results of base LLMs evaluated directly and the LLMs incorporated with our agent trained on PHEME and TWITTER, which are tested on Twitter-COVID-19. Bars with diagonal hatching (``/'') indicate models trained directly on Twitter-COVID-19.}
    \label{fig:ood improvement gpt} 
\end{figure}

\subsubsection{Analysis on How Generalizability Varies Across LLMs} \label{exp1}
As shown in Table~\ref{tab:exp2} and discussed in \S~\ref{sec:main results}, our method generalizes better than the baseline EARD approaches to new event data. Trained on pre-COVID datasets (PHEME and TWITTER) and tested on Twitter-COVID-19, it consistently outperforms baselines in both F1 and ER. We further investigate how this generalizability varies across different base LLMs. As shown in Figure~\ref{fig:ood improvement gpt}, our method dramatically boosts Llama3 by around 17\% and 18\% when trained on PHEME and TWITTER, respectively. This suggests that the agent can learn common expert decision patterns encoded in expert trajectories from other datasets, which benefits LLMs when applied to new event data. These gains are less pronounced with Mistral, which outperforms Llama3, and turn into a negative influence with ChatGPT, which initially shows the best performance on this COVID-related dataset. We conjecture that as the base model’s initial performance improves, the benefits of learned common behavior patterns become less useful compared to the information specific to the new event data. Thus, applying an agent trained on other datasets could lead to suboptimal decisions. One possible solution is to train the agent on a small number of new event data. As shown in Figure~\ref{fig:ood improvement gpt}, even a few samples from the Twitter-COVID-19 dataset can improve performance across all base LLMs. Another promising direction is to model this problem as a Partially Observable Markov Decision Process (POMDP)~\cite{sondik1971optimal,spaan2012partially}, which is useful for capturing 
latent uncertainties and incomplete observations in sequential decision-making \cite{kaelbling1998planning} and may better capture the uncertainties in scenarios involving new event data. We leave this for future work.

\begin{table}[ht]
\centering
\begin{tabular}{lcccc}
\toprule
\textbf{Dataset} & 10 shots & 20 shots & 30 shots & 40 shots \\
\midrule
\textbf{PHEME} & 8.86\% $\uparrow$ & 8.17\% $\uparrow$  & 8.16\% $\uparrow$  & 8.92\% $\uparrow$  \\
\textbf{TWITTER} & 26.19\% $\uparrow$  & 27.64\% $\uparrow$  & 25.17\% $\uparrow$ & 23.46\% $\uparrow$  \\
\textbf{BEARD} & 18.26\% $\uparrow$  & 22.37\% $\uparrow$  & 18.88\% $\uparrow$  & 20.61\% $\uparrow$  \\
\textbf{Twitter-COVID-19} & 1.37\% $\uparrow$ & 2.82\% $\uparrow$ & 0.92\% $\uparrow$ & 3.99\% $\uparrow$ \\
\bottomrule
\end{tabular}
\caption{Relative F1 improvement of our method over ChatGPT used as the base LLM under varying number of shots.}
\label{tab:shots main text}
\vspace{-6ex}
\end{table}

\subsubsection{Analysis on Robustness to Limited Supervision} \label{main text: num of shots}
To evaluate robustness under limited labeled data, we reduce the number of training shots and report results of our method using ChatGPT as the base model in Table~\ref{tab:shots main text}, showing that our method remains effective even with scarce labeled data. Across all datasets, it consistently outperforms base LLMs, achieving relative F1 improvements up to 27.64\%. Even with only 10 shots, our approach maintains stable gains, demonstrating strong capability with only a minimal supervision. The detailed results are provided in Appendix~\ref{appendix:shots}.

\begin{figure}[h]
    \includegraphics[width=0.9\linewidth]{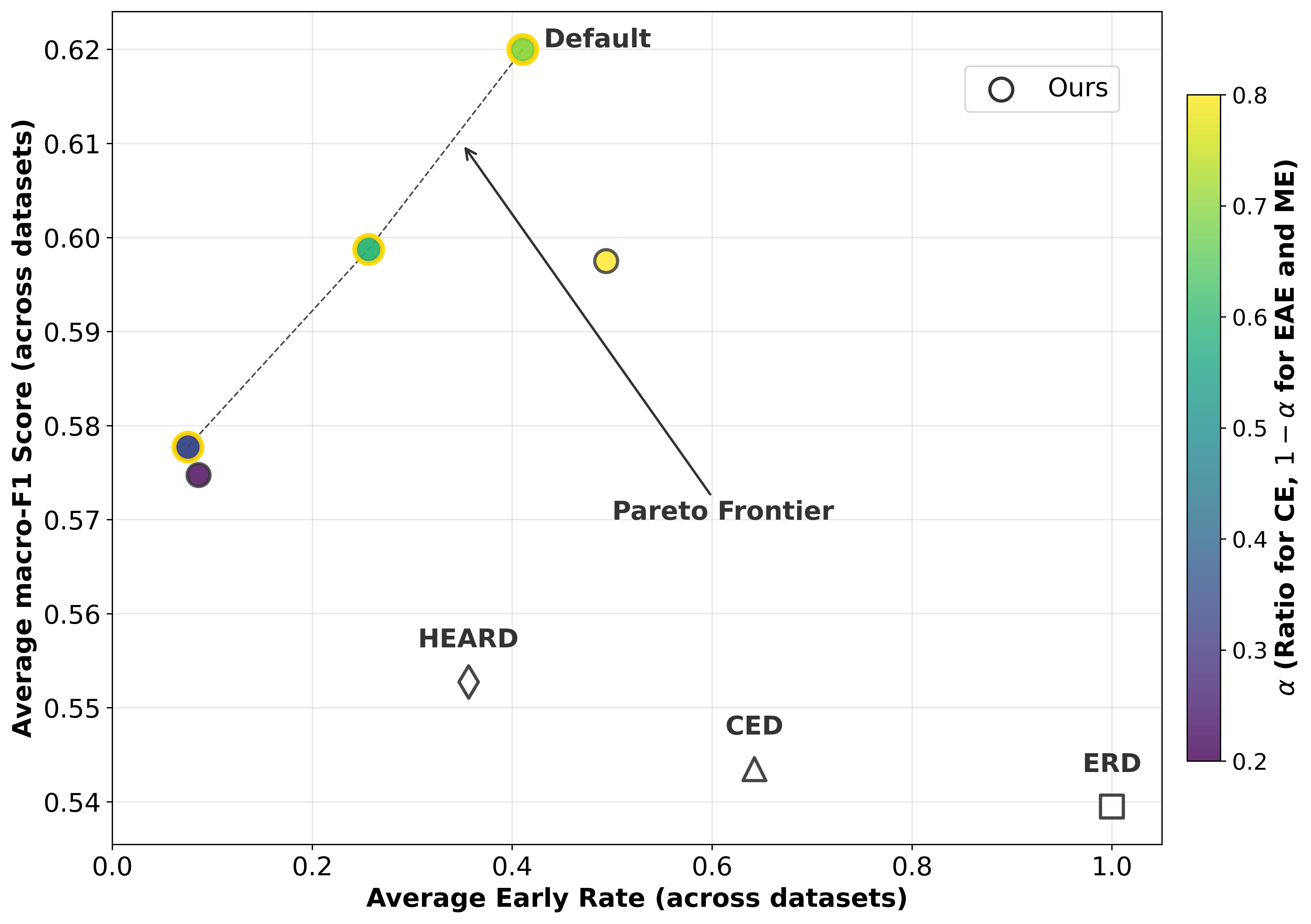}
  \caption{Trade-off between classification accuracy and earliness for our method (across different expert trajectory ratios) and existing EARD methods. Each marker shows average performance over 4 datasets. Our settings are circles, color-coded by $\alpha$ (CE). Gold outlines mark Pareto-optimal points, connected by the dashed black line to form the Pareto frontier.}
  \label{fig:different ratios}
\end{figure}

\subsubsection{Analysis on Trade-off between Classification Accuracy and Earliness.} \label{trade-off}
We analyze the trade-off between classification performance and ER by varying expert trajectory ratios in our method and comparing with EARD baselines. In Figure~\ref{fig:different ratios}, each circle represents a setting of our method, with a different color indicating the CE ratio ($\alpha$), while $1 - \alpha$ is equally distributed between EAE and ME. Baseline EARD methods are included for comparison. We identify those models as Pareto-optimal points, in which no other setting achieves both a higher F1 score and a lower ER. 
We observe that none of the EARD baselines lie on the Pareto frontier and all of them yield lower F1 scores compared to our method while CED/ERD even exhibit higher ER. In contrast, our method achieves superior trade-offs between accuracy and earliness, with several settings dominating the baselines and lie on the Pareto frontier.
We also observe that progressively decreasing the ratio of \textit{CE} (darker circle colors) while increasing the ratios of \textit{EAE} and \textit{ME} generally leads to a reduction in ER. This is expected, as a higher proportion of \textit{EAE} trajectories allows for earlier stopping compared to the more cautious \textit{CE} trajectories. However, this comes with a price, as it leads to lower F1 score. 
As discussed in \S\ref{Objective Function}, we prioritize the proportion of \textit{CE} trajectories and make \textit{EAE} and \textit{ME} trajectories contribute equally to strike a balance during training to enable the model to make stable and accurate early predictions without being overly cautious or aggressive. And among the tested settings, our default configuration achieves the highest F1 score and competitive ER on the Pareto frontier. These findings suggest the importance of balancing different expert trajectory types to optimize the agent’s policy for improving EARD.

\section{Conclusion}
We introduce a novel framework that combines LLMs with an autonomous neural agent to provide an agile and cost-effective solution for few-shot early rumor detection given data-limited scenarios. Our approach involves training the agent using an imitation learning algorithm, which leverages three types of expert trajectories carefully curated considering earliness, accuracy, and stability in early rumor detection task. Theoretically, our method is proven to yield an optimal policy. Experiments on four real-world datasets show our approach boosts performance across LLMs and surpasses existing EARD methods in accuracy and earliness.

\begin{acks}
This research/project is supported by the National Research Foundation, Singapore under its AI Singapore Programme (AISG Award No: AISG3-RP-2024-035).
\end{acks}

\clearpage


\bibliographystyle{ACM-Reference-Format}

\bibliography{sample-base}

@String{Computing = "Computing" }

@String{Computer = "{IEEE} Computer" }

@String{Chelsea = "Chelsea" }

@String{Springer = "Springer-Verlag" }

@BOOK{test,
   author = "Donald E. Knuth",
   title = "Seminumerical Algorithms",
   volume = 2,
   series = "The Art of Computer Programming",
   publisher = "Addison-Wesley",
   address = "Reading, MA",
   edition = "2nd",
   month = "10~" # jan,
   year = "1981",
}

@ArtifactSoftware{R,
    title = {R: A Language and Environment for Statistical Computing},
    author = {{R Core Team}},
    organization = {R Foundation for Statistical Computing},
    address = {Vienna, Austria},
    year = {2019},
    url = {https://www.R-project.org/},
}

@article{Mistral,
  author = {Albert Q. Jiang and
                  Alexandre Sablayrolles and
                  Arthur Mensch and
                  Chris Bamford and
                  Devendra Singh Chaplot and
                  Diego de Las Casas and
                  Florian Bressand and
                  Gianna Lengyel and
                  Guillaume Lample and
                  Lucile Saulnier and
                  L{\'{e}}lio Renard Lavaud and
                  Marie{-}Anne Lachaux and
                  Pierre Stock and
                  Teven Le Scao and
                  Thibaut Lavril and
                  Thomas Wang and
                  Timoth{\'{e}}e Lacroix and
                  William El Sayed},
  title        = {Mistral 7B},
  journal      = {CoRR},
  volume       = {abs/2310.06825},
  year         = {2023},
  url          = {https://doi.org/10.48550/arXiv.2310.06825},
  doi          = {10.48550/ARXIV.2310.06825},
  eprinttype    = {arXiv},
  eprint       = {2310.06825},
  timestamp    = {Thu, 26 Oct 2023 16:46:26 +0200},
  biburl       = {https://dblp.org/rec/journals/corr/abs-2310-06825.bib},
  bibsource    = {dblp computer science bibliography, https://dblp.org}
}

@misc{ChatGPT,
    title={GPT-4o},
    url={https://openai.com/index/hello-gpt-4o/},
    author={OpenAI},
    month={May},
    year={2024}
}

@article{RLHF-cons,
  author       = {Stephen Casper and
                  Xander Davies and
                  Claudia Shi and
                  Thomas Krendl Gilbert and
                  J{\'{e}}r{\'{e}}my Scheurer and
                  Javier Rando and
                  Rachel Freedman and
                  Tomasz Korbak and
                  David Lindner and
                  Pedro Freire and
                  Tony Tong Wang and
                  Samuel Marks and
                  Charbel{-}Rapha{\"{e}}l S{\'{e}}gerie and
                  Micah Carroll and
                  Andi Peng and
                  Phillip J. K. Christoffersen and
                  Mehul Damani and
                  Stewart Slocum and
                  Usman Anwar and
                  Anand Siththaranjan and
                  Max Nadeau and
                  Eric J. Michaud and
                  Jacob Pfau and
                  Dmitrii Krasheninnikov and
                  Xin Chen and
                  Lauro Langosco and
                  Peter Hase and
                  Erdem Biyik and
                  Anca D. Dragan and
                  David Krueger and
                  Dorsa Sadigh and
                  Dylan Hadfield{-}Menell},
  title        = {Open Problems and Fundamental Limitations of Reinforcement Learning
                  from Human Feedback},
  journal      = {Trans. Mach. Learn. Res.},
  volume       = {2023},
  year         = {2023},
  url          = {https://openreview.net/forum?id=bx24KpJ4Eb},
  timestamp    = {Fri, 02 Aug 2024 11:44:54 +0200},
  biburl       = {https://dblp.org/rec/journals/tmlr/CasperDSGSRFKLF23.bib},
  bibsource    = {dblp computer science bibliography, https://dblp.org}
}

@article{LLM-TS,
  author       = {Mingtian Tan and
                  Mike A. Merrill and
                  Vinayak Gupta and
                  Tim Althoff and
                  Thomas Hartvigsen},
  title        = {Are Language Models Actually Useful for Time Series Forecasting?},
  journal      = {CoRR},
  volume       = {abs/2406.16964},
  year         = {2024},
  url          = {https://doi.org/10.48550/arXiv.2406.16964},
  doi          = {10.48550/ARXIV.2406.16964},
  eprinttype    = {arXiv},
  eprint       = {2406.16964},
  timestamp    = {Mon, 22 Jul 2024 14:28:25 +0200},
  biburl       = {https://dblp.org/rec/journals/corr/abs-2406-16964.bib},
  bibsource    = {dblp computer science bibliography, https://dblp.org}
}

@article{DBLP:journals/corr/abs-2010-06906,
  author       = {Debanjana Kar and
                  Mohit Bhardwaj and
                  Suranjana Samanta and
                  Amar Prakash Azad},
  title        = {No Rumours Please! {A} Multi-Indic-Lingual Approach for {COVID} Fake-Tweet
                  Detection},
  journal      = {CoRR},
  volume       = {abs/2010.06906},
  year         = {2020},
  url          = {https://arxiv.org/abs/2010.06906},
  eprinttype    = {arXiv},
  eprint       = {2010.06906},
  timestamp    = {Tue, 20 Oct 2020 15:08:10 +0200},
  biburl       = {https://dblp.org/rec/journals/corr/abs-2010-06906.bib},
  bibsource    = {dblp computer science bibliography, https://dblp.org}
}

@article{zubiaga2016learning,
  title={Learning reporting dynamics during breaking news for rumour detection in social media},
  author={Zubiaga, Arkaitz and Liakata, Maria and Procter, Rob},
  journal={arXiv preprint arXiv:1610.07363},
  year={2016}
}

@inproceedings{ma2016detecting,
 author = {Jing Ma and
    Wei Gao and
    Prasenjit Mitra and
    Sejeong Kwon and
    Bernard J. Jansen and
    Kam{-}Fai Wong and
    Meeyoung Cha},
 bibsource = {dblp computer science bibliography, https://dblp.org},
 biburl = {https://dblp.org/rec/conf/ijcai/MaGMKJWC16.bib},
 booktitle = {Proceedings of the 25th International Joint Conference on Artificial Intelligence},
 editor = {Subbarao Kambhampati},
 pages = {3818--3824},
 timestamp = {Thu, 03 Dec 2020 00:00:00 +0100},
 title = {Detecting Rumors from Microblogs with Recurrent Neural Networks},
 url = {http://www.ijcai.org/Abstract/16/537},
 year = {2016}
}

@article{Lin_Yi_Ma_Jiang_Luo_Shi_Liu_2023, title={Zero-Shot Rumor Detection with Propagation Structure via Prompt Learning}, volume={37}, url={https://ojs.aaai.org/index.php/AAAI/article/view/25651}, DOI={10.1609/aaai.v37i4.25651}, number={4}, journal={Proceedings of the AAAI Conference on Artificial Intelligence}, author={Lin, Hongzhan and Yi, Pengyao and Ma, Jing and Jiang, Haiyun and Luo, Ziyang and Shi, Shuming and Liu, Ruifang}, year={2023}, month={Jun.}, pages={5213-5221} }

@inproceedings{guo2018rumor,
 author = {Han Guo and
Juan Cao and
Yazi Zhang and
Junbo Guo and
Jintao Li},
 bibsource = {dblp computer science bibliography, https://dblp.org},
 biburl = {https://dblp.org/rec/conf/cikm/GuoCZGL18.bib},
 booktitle = {Proceedings of the 27th {ACM} International Conference on Information
and Knowledge Management},
 doi = {10.1145/3269206.3271709},
 editor = {Alfredo Cuzzocrea and
James Allan and
Norman W. Paton and
Divesh Srivastava and
Rakesh Agrawal and
Andrei Z. Broder and
Mohammed J. Zaki and
K. Sel{\c{c}}uk Candan and
Alexandros Labrinidis and
Assaf Schuster and
Haixun Wang},
 pages = {943--951},
 timestamp = {Wed, 21 Nov 2018 00:00:00 +0100},
 title = {Rumor Detection with Hierarchical Social Attention Network},
 url = {https://doi.org/10.1145/3269206.3271709},
 year = {2018}
}

@inproceedings{ma2017detect,
 address = {Vancouver, Canada},
 author = {Ma, Jing  and
Gao, Wei  and
Wong, Kam-Fai},
 booktitle = {Proceedings of the 55th Annual Meeting of the Association for Computational Linguistics (Volume 1: Long Papers)},
 doi = {10.18653/v1/P17-1066},
 pages = {708--717},
 publisher = {Association for Computational Linguistics},
 title = {Detect Rumors in Microblog Posts Using Propagation Structure via Kernel Learning},
 url = {https://aclanthology.org/P17-1066},
 year = {2017}
}

@inproceedings{yu2017convolutional,
 author = {Feng Yu and
Qiang Liu and
Shu Wu and
Liang Wang and
Tieniu Tan},
 bibsource = {dblp computer science bibliography, https://dblp.org},
 biburl = {https://dblp.org/rec/conf/ijcai/YuLWWT17.bib},
 booktitle = {Proceedings of the 26th International Joint Conference on
Artificial Intelligence},
 doi = {10.24963/ijcai.2017/545},
 editor = {Carles Sierra},
 pages = {3901--3907},
 timestamp = {Tue, 20 Aug 2019 01:00:00 +0200},
 title = {A Convolutional Approach for Misinformation Identification},
 url = {https://doi.org/10.24963/ijcai.2017/545},
 year = {2017}
}

@inproceedings{zhao2015enquiring,
 author = {Zhe Zhao and
Paul Resnick and
Qiaozhu Mei},
 bibsource = {dblp computer science bibliography, https://dblp.org},
 biburl = {https://dblp.org/rec/conf/www/ZhaoRM15.bib},
 booktitle = {Proceedings of the 24th International Conference on World Wide Web},
 doi = {10.1145/2736277.2741637},
 editor = {Aldo Gangemi and
Stefano Leonardi and
Alessandro Panconesi},
 pages = {1395--1405},
 timestamp = {Tue, 06 Nov 2018 00:00:00 +0100},
 title = {Enquiring Minds: Early Detection of Rumors in Social Media from Enquiry
Posts},
 url = {https://doi.org/10.1145/2736277.2741637},
 year = {2015}
}

@inproceedings{bian2020rumor,
 author = {Tian Bian and
Xi Xiao and
Tingyang Xu and
Peilin Zhao and
Wenbing Huang and
Yu Rong and
Junzhou Huang},
 bibsource = {dblp computer science bibliography, https://dblp.org},
 biburl = {https://dblp.org/rec/conf/aaai/BianXXZHRH20.bib},
 booktitle = {Proceedings of the 34th AAAI conference on artificial intelligence},
 pages = {549--556},
 timestamp = {Fri, 05 Jun 2020 01:00:00 +0200},
 title = {Rumor Detection on Social Media with Bi-Directional Graph Convolutional
Networks},
url = {https://aaai.org/ojs/index.php/AAAI/article/view/5393},
 year = {2020}
}

@inproceedings{zhou2019early,
 address = {Minneapolis, Minnesota},
 author = {Zhou, Kaimin  and
Shu, Chang  and
Li, Binyang  and
Lau, Jey Han},
 booktitle = {Proceedings of the 2019 Conference of the North {A}merican Chapter of the Association for Computational Linguistics: Human Language Technologies, Volume 1 (Long and Short Papers)},
 doi = {10.18653/v1/N19-1163},
 pages = {1614--1623},
 publisher = {Association for Computational Linguistics},
 title = {Early Rumour Detection},
 url = {https://aclanthology.org/N19-1163},
 year = {2019}
}

@article{song2019ced,
 author = {Song, Changhe and Yang, Cheng and Chen, Huimin and Tu, Cunchao and Liu, Zhiyuan and Sun, Maosong},
journal = {IEEE Transactions on Knowledge and Data Engineering},
title = {CED: Credible early detection of social media rumors},
url = {https://ieeexplore.ieee.org/abstract/document/8939421},
volume={33},
number={8},
pages={3035-3047},
doi={10.1109/TKDE.2019.2961675},
year = {2019}
}

@inproceedings{wu2017gleaning,
 author = {Liang Wu and
Jundong Li and
Xia Hu and
Huan Liu},
 bibsource = {dblp computer science bibliography, https://dblp.org},
 biburl = {https://dblp.org/rec/conf/sdm/WuLHL17.bib},
 booktitle = {Proceedings of the 2017 {SIAM} International Conference on Data Mining},
 doi = {10.1137/1.9781611974973.12},
 editor = {Nitesh V. Chawla and
Wei Wang},
 pages = {99--107},
 timestamp = {Sun, 11 Feb 2018 00:00:00 +0100},
 title = {Gleaning Wisdom from the Past: Early Detection of Emerging Rumors
in Social Media},
 url = {https://doi.org/10.1137/1.9781611974973.12},
 year = {2017}
}

@inproceedings{nguyen2017early,
 author = {Nguyen, Tu Ngoc and Li, Cheng and Nieder{\'e}e, Claudia},
 booktitle = {International Conference on Social Informatics},
 pages = {141--158},
 title = {On early-stage debunking rumors on twitter: Leveraging the wisdom of weak learners},
 url = {https://link.springer.com/chapter/10.1007/978-3-319-67256-4_13},
 doi = {10.1007/978-3-319-67256-4_13},
 year = {2017}
}

@inproceedings{mei-etal-2017-The,
 author = {Hongyuan Mei and
Jason Eisner},
 bibsource = {dblp computer science bibliography, https://dblp.org},
 biburl = {https://dblp.org/rec/conf/nips/MeiE17.bib},
 booktitle = {Advances in Neural Information Processing Systems 30},
 editor = {Isabelle Guyon and
Ulrike von Luxburg and
Samy Bengio and
Hanna M. Wallach and
Rob Fergus and
S. V. N. Vishwanathan and
Roman Garnett},
 pages = {6754--6764},
 timestamp = {Thu, 21 Jan 2021 00:00:00 +0100},
 title = {The Neural Hawkes Process: {A} Neurally Self-Modulating Multivariate
Point Process},
 url = {https://proceedings.neurips.cc/paper/2017/hash/6463c88460bd63bbe256e495c63aa40b-Abstract.html},
 year = {2017}
}

@inproceedings{kingma2014adam,
 author = {Diederik P. Kingma and
Jimmy Ba},
 bibsource = {dblp computer science bibliography, https://dblp.org},
 biburl = {https://dblp.org/rec/journals/corr/KingmaB14.bib},
 booktitle = {Proceedings of the 3rd International Conference on Learning Representations},
 editor = {Yoshua Bengio and
Yann LeCun},
 timestamp = {Thu, 25 Jul 2019 01:00:00 +0200},
 title = {Adam: {A} Method for Stochastic Optimization},
 url = {https://arxiv.org/abs/1412.6980},
 year = {2015}
}

@inproceedings{miao2021syntax,
 author = {Miao, Xin and Rao, Dongning and Jiang, Zhihua},
 booktitle = {CCF International Conference on Natural Language Processing and Chinese Computing},
 pages = {570--582},
 title = {Syntax and Sentiment Enhanced BERT for Earliest Rumor Detection},
 doi = {10.1007/978-3-030-88480-2_45},
 url = {https://link.springer.com/chapter/10.1007/978-3-030-88480-2_45},
 year = {2021}
}

@article{fang2019survey,
  title={Survey of imitation learning for robotic manipulation},
  author={Fang, Bin and Jia, Shidong and Guo, Di and Xu, Muhua and Wen, Shuhuan and Sun, Fuchun},
  journal={International Journal of Intelligent Robotics and Applications},
  volume={3},
  number={4},
  pages={362--369},
  year={2019},
  publisher={Springer}
}

@inproceedings{kuefler2017imitating,
  title={Imitating driver behavior with generative adversarial networks},
  author={Kuefler, Alex and Morton, Jeremy and Wheeler, Tim and Kochenderfer, Mykel},
  booktitle={2017 IEEE Intelligent Vehicles Symposium (IV)},
  pages={204--211},
  year={2017},
  organization={IEEE}
}

@article{hussein2017imitation,
  title={Imitation learning: A survey of learning methods},
  author={Hussein, Ahmed and Gaber, Mohamed Medhat and Elyan, Eyad and Jayne, Chrisina},
  journal={ACM Computing Surveys (CSUR)},
  volume={50},
  number={2},
  pages={1--35},
  year={2017},
  publisher={ACM New York, NY, USA}
}

@article{ingimundardottir2018discovering,
  title={Discovering dispatching rules from data using imitation learning: A case study for the job-shop problem},
  author={Ingimundardottir, Helga and Runarsson, Thomas Philip},
  journal={Journal of Scheduling},
  volume={21},
  pages={413--428},
  year={2018},
  publisher={Springer}
}

@inproceedings{ross2011reduction,
  title={A reduction of imitation learning and structured prediction to no-regret online learning},
  author={Ross, St{\'e}phane and Gordon, Geoffrey and Bagnell, Drew},
  booktitle={Fourteenth International Conference on Artificial Intelligence and Statistics},
  pages={627--635},
  year={2011},
  organization={JMLR Workshop and Conference Proceedings}
}

@inproceedings{ng2000algorithms,
  title={Algorithms for inverse reinforcement learning},
  author={Ng, Andrew Y and Russell, Stuart and others},
  booktitle={Seventeenth International Conference on Machine Learning},
  volume={1},
  pages={2},
  year={2000}
}

@inproceedings{ziebart2010modeling,
  title={Modeling interaction via the principle of maximum causal entropy},
  author={Ziebart, Brian D and Bagnell, J Andrew and Dey, Anind K},
  booktitle={Twenty-Seventh International Conference on Machine Learning},
  pages={1255--1262},
  year={2010}
}

@article{schulman2017proximal,
  title={Proximal policy optimization algorithms},
  author={Schulman, John and Wolski, Filip and Dhariwal, Prafulla and Radford, Alec and Klimov, Oleg},
  journal={arXiv preprint arXiv:1707.06347},
  year={2017}
}

@inproceedings{ziebart2008maximum,
  title={Maximum entropy inverse reinforcement learning},
  author={Ziebart, Brian D and Maas, Andrew L and Bagnell, J Andrew and Dey, Anind K and others},
  booktitle={Twenty-Third AAAI Conference on Artificial Intelligence},
  volume={8},
  pages={1433--1438},
  year={2008},
  organization={Chicago, IL, USA}
}

@article{Lan_Gao_Jin_Li_2024, title={Stance Detection with Collaborative Role-Infused LLM-Based Agents}, volume={18}, url={https://ojs.aaai.org/index.php/ICWSM/article/view/31360}, DOI={10.1609/icwsm.v18i1.31360}, number={1}, journal={Proceedings of the International AAAI Conference on Web and Social Media}, author={Lan, Xiaochong and Gao, Chen and Jin, Depeng and Li, Yong}, year={2024}, month={May}, pages={891-903} }

@misc{liu2024largelanguagemodelsdetect,
      title={Can Large Language Models Detect Rumors on Social Media?}, 
      author={Qiang Liu and Xiang Tao and Junfei Wu and Shu Wu and Liang Wang},
      year={2024},
      eprint={2402.03916},
      archivePrefix={arXiv},
      primaryClass={cs.IR},
      url={https://arxiv.org/abs/2402.03916}, 
}

@INPROCEEDINGS{10459901,
  author={Guo, Keyan and Hu, Alexander and Mu, Jaden and Shi, Ziheng and Zhao, Ziming and Vishwamitra, Nishant and Hu, Hongxin},
  booktitle={2023 International Conference on Machine Learning and Applications (ICMLA)}, 
  title={An Investigation of Large Language Models for Real-World Hate Speech Detection}, 
  year={2023},
  volume={},
  number={},
  pages={1568-1573},
  keywords={Training;Hate speech;Knowledge based systems;Natural languages;Machine learning;Benchmark testing;Cognition;hate speech;large language model;prompt engineering;few-shot learning},
  doi={10.1109/ICMLA58977.2023.00237}}

@inproceedings{10.1145/3543873.3587605,
author = {Deng, Xiang and Bashlovkina, Vasilisa and Han, Feng and Baumgartner, Simon and Bendersky, Michael},
title = {LLMs to the Moon? Reddit Market Sentiment Analysis with Large Language Models},
year = {2023},
isbn = {9781450394192},
publisher = {Association for Computing Machinery},
address = {New York, NY, USA},
url = {https://doi.org/10.1145/3543873.3587605},
doi = {10.1145/3543873.3587605},
booktitle = {Companion Proceedings of the ACM Web Conference 2023},
pages = {1014–1019},
numpages = {6},
keywords = {Finance, Large Language Model, Natural Language Processing, Sentiment Analysis, Social Media},
location = {Austin, TX, USA},
series = {WWW '23 Companion}
}

@article{llama3,
  author       = {Abhimanyu Dubey and
                  Abhinav Jauhri and
                  Abhinav Pandey and
                  Abhishek Kadian and
                  Ahmad Al{-}Dahle and
                  Aiesha Letman and
                  Akhil Mathur and
                  Alan Schelten and
                  Amy Yang and
                  Angela Fan and
                  Anirudh Goyal and
                  Anthony Hartshorn and
                  Aobo Yang and
                  Archi Mitra and
                  Archie Sravankumar and
                  Artem Korenev and
                  Arthur Hinsvark and
                  Arun Rao and
                  Aston Zhang and
                  Aur{\'{e}}lien Rodriguez and
                  Austen Gregerson and
                  Ava Spataru and
                  Baptiste Rozi{\`{e}}re and
                  Bethany Biron and
                  Binh Tang and
                  Bobbie Chern and
                  Charlotte Caucheteux and
                  Chaya Nayak and
                  Chloe Bi and
                  Chris Marra and
                  Chris McConnell and
                  Christian Keller and
                  Christophe Touret and
                  Chunyang Wu and
                  Corinne Wong and
                  Cristian Canton Ferrer and
                  Cyrus Nikolaidis and
                  Damien Allonsius and
                  Daniel Song and
                  Danielle Pintz and
                  Danny Livshits and
                  David Esiobu and
                  Dhruv Choudhary and
                  Dhruv Mahajan and
                  Diego Garcia{-}Olano and
                  Diego Perino and
                  Dieuwke Hupkes and
                  Egor Lakomkin and
                  Ehab AlBadawy and
                  Elina Lobanova and
                  Emily Dinan and
                  Eric Michael Smith and
                  Filip Radenovic and
                  Frank Zhang and
                  Gabriel Synnaeve and
                  Gabrielle Lee and
                  Georgia Lewis Anderson and
                  Graeme Nail and
                  Gr{\'{e}}goire Mialon and
                  Guan Pang and
                  Guillem Cucurell and
                  Hailey Nguyen and
                  Hannah Korevaar and
                  Hu Xu and
                  Hugo Touvron and
                  Iliyan Zarov and
                  Imanol Arrieta Ibarra and
                  Isabel M. Kloumann and
                  Ishan Misra and
                  Ivan Evtimov and
                  Jade Copet and
                  Jaewon Lee and
                  Jan Geffert and
                  Jana Vranes and
                  Jason Park and
                  Jay Mahadeokar and
                  Jeet Shah and
                  Jelmer van der Linde and
                  Jennifer Billock and
                  Jenny Hong and
                  Jenya Lee and
                  Jeremy Fu and
                  Jianfeng Chi and
                  Jianyu Huang and
                  Jiawen Liu and
                  Jie Wang and
                  Jiecao Yu and
                  Joanna Bitton and
                  Joe Spisak and
                  Jongsoo Park and
                  Joseph Rocca and
                  Joshua Johnstun and
                  Joshua Saxe and
                  Junteng Jia and
                  Kalyan Vasuden Alwala and
                  Kartikeya Upasani and
                  Kate Plawiak and
                  Ke Li and
                  Kenneth Heafield and
                  Kevin Stone and
                  et al.},
  title        = {The Llama 3 Herd of Models},
  journal      = {CoRR},
  volume       = {abs/2407.21783},
  year         = {2024},
  url          = {https://doi.org/10.48550/arXiv.2407.21783},
  doi          = {10.48550/ARXIV.2407.21783},
  eprinttype    = {arXiv},
  eprint       = {2407.21783},
  timestamp    = {Mon, 26 Aug 2024 08:08:35 +0200},
  biburl       = {https://dblp.org/rec/journals/corr/abs-2407-21783.bib},
  bibsource    = {dblp computer science bibliography, https://dblp.org}
}

@InProceedings{millar1983minimax,
    author="Millar, P. Warwick",
    title="The minimax principle in asymptotic statistical theory",
    booktitle="Ecole d'Et{\'e} de Probabilit{\'e}s de Saint-Flour XI --- 1981",
    year="1983",
    publisher="Springer Berlin Heidelberg",
    address="Berlin, Heidelberg",
    pages="75--265",
    isbn="978-3-540-39458-7"
}

@inproceedings{syed2008apprenticeship,
  title={Apprenticeship learning using linear programming},
  author={Syed, Umar and Bowling, Michael and Schapire, Robert E},
  booktitle={Twenty-Fifth International Conference on Machine Learning},
  pages={1032--1039},
  year={2008}
}

@article{ho2016generative,
  title={Generative adversarial imitation learning},
  author={Ho, Jonathan and Ermon, Stefano},
  journal={Advances in Neural Information Processing Systems},
  volume={29},
  year={2016}
}

@book{puterman2014markov,
  title={Markov decision processes: discrete stochastic dynamic programming},
  author={Puterman, Martin L},
  year={2014},
  publisher={John Wiley \& Sons}
}

@article{schulman2015high,
  title={High-dimensional continuous control using generalized advantage estimation},
  author={Schulman, John and Moritz, Philipp and Levine, Sergey and Jordan, Michael and Abbeel, Pieter},
  journal={arXiv preprint arXiv:1506.02438},
  year={2015}
}

@inproceedings{finn2016guided,
  title={Guided cost learning: Deep inverse optimal control via policy optimization},
  author={Finn, Chelsea and Levine, Sergey and Abbeel, Pieter},
  booktitle={International conference on machine learning},
  pages={49--58},
  year={2016},
  organization={PMLR}
}

@inproceedings{bain1995framework,
  title={A Framework for Behavioural Cloning.},
  author={Bain, Michael and Sammut, Claude},
  booktitle={Machine Intelligence 15},
  pages={103--129},
  year={1995}
}

@inproceedings{shao2024imitating,
  title={Imitating Cost-Constrained Behaviors in Reinforcement Learning},
  author={Shao, Qian and Varakantham, Pradeep and Cheng, Shih-Fen},
  booktitle={Proceedings of the International Conference on Automated Planning and Scheduling},
  volume={34},
  pages={514--522},
  year={2024}
}

@inproceedings{achiam2017constrained,
  title={Constrained policy optimization},
  author={Achiam, Joshua and Held, David and Tamar, Aviv and Abbeel, Pieter},
  booktitle={International conference on machine learning},
  pages={22--31},
  year={2017},
  organization={PMLR}
}

@inproceedings{satija2020constrained,
  title={Constrained markov decision processes via backward value functions},
  author={Satija, Harsh and Amortila, Philip and Pineau, Joelle},
  booktitle={International Conference on Machine Learning},
  pages={8502--8511},
  year={2020},
  organization={PMLR}
}

@incollection{spaan2012partially,
  title={Partially observable Markov decision processes},
  author={Spaan, Matthijs TJ},
  booktitle={Reinforcement learning: State-of-the-art},
  pages={387--414},
  year={2012},
  publisher={Springer}
}

@article{kaelbling1998planning,
  title={Planning and acting in partially observable stochastic domains},
  author={Kaelbling, Leslie Pack and Littman, Michael L and Cassandra, Anthony R},
  journal={Artificial intelligence},
  volume={101},
  number={1-2},
  pages={99--134},
  year={1998},
  publisher={Elsevier}
}

@book{sondik1971optimal,
  title={The optimal control of partially observable Markov processes},
  author={Sondik, Edward Jay},
  year={1971},
  publisher={Stanford University}
}

@inproceedings{zeng-gao-2022-early,
	title = "{E}arly Rumor Detection Using Neural {H}awkes Process with a New Benchmark Dataset",
	author = "Zeng, Fengzhu  and
	  Gao, Wei",
	editor = "Carpuat, Marine  and
	  de Marneffe, Marie-Catherine  and
	  Meza Ruiz, Ivan Vladimir",
	booktitle = NAACL:2022:main,
	month = jul,
	year = "2022",
	address = "Seattle, United States",
	publisher = acl,
	url = anth # {2022.naacl-main.302},
	doi = "10.18653/v1/2022.naacl-main.302",
	pages = "4105--4117",
}

@inproceedings{zhou-etal-2019-early,
	title = "Early Rumour Detection",
	author = "Zhou, Kaimin  and
	  Shu, Chang  and
	  Li, Binyang  and
	  Lau, Jey Han",
	editor = "Burstein, Jill  and
	  Doran, Christy  and
	  Solorio, Thamar",
	booktitle = NAACL:2019:1,
	month = jun,
	year = "2019",
	address = "Minneapolis, Minnesota",
	publisher = acl,
	url = anth # {N19-1163},
	doi = "10.18653/v1/N19-1163",
	pages = "1614--1623",
}

@inproceedings{saakyan-etal-2021-covid,
	title = "{COVID}-Fact: Fact Extraction and Verification of Real-World Claims on {COVID}-19 Pandemic",
	author = "Saakyan, Arkadiy  and
	  Chakrabarty, Tuhin  and
	  Muresan, Smaranda",
	editor = "Zong, Chengqing  and
	  Xia, Fei  and
	  Li, Wenjie  and
	  Navigli, Roberto",
	booktitle = ACL:2021:long,
	month = aug,
	year = "2021",
	address = "Online",
	publisher = acl,
	url = anth # {2021.acl-long.165},
	doi = "10.18653/v1/2021.acl-long.165",
	pages = "2116--2129",
}

@inproceedings{roy-etal-2023-probing,
	title = "Probing {LLM}s for hate speech detection: strengths and vulnerabilities",
	author = "Roy, Sarthak  and
	  Harshvardhan, Ashish  and
	  Mukherjee, Animesh  and
	  Saha, Punyajoy",
	editor = "Bouamor, Houda  and
	  Pino, Juan  and
	  Bali, Kalika",
	booktitle = FINDINGS:2023:emnlp,
	month = dec,
	year = "2023",
	address = "Singapore",
	publisher = acl,
	url = anth # {2023.findings-emnlp.407},
	doi = "10.18653/v1/2023.findings-emnlp.407",
	pages = "6116--6128",
}

@inproceedings{yang-etal-2024-reinforcement,
	title = "Reinforcement Tuning for Detecting Stances and Debunking Rumors Jointly with Large Language Models",
	author = "Yang, Ruichao  and
	  Gao, Wei  and
	  Ma, Jing  and
	  Lin, Hongzhan  and
	  Wang, Bo",
	editor = "Ku, Lun-Wei  and
	  Martins, Andre  and
	  Srikumar, Vivek",
	booktitle = FINDINGS:2024:acl,
	month = aug,
	year = "2024",
	address = "Bangkok, Thailand and virtual meeting",
	publisher = acl,
	url = anth # {2024.findings-acl.796},
	doi = "10.18653/v1/2024.findings-acl.796",
	pages = "13423--13439",
}

@inproceedings{zhang-etal-2024-sentiment,
	title = "Sentiment Analysis in the Era of Large Language Models: A Reality Check",
	author = "Zhang, Wenxuan  and
	  Deng, Yue  and
	  Liu, Bing  and
	  Pan, Sinno  and
	  Bing, Lidong",
	editor = "Duh, Kevin  and
	  Gomez, Helena  and
	  Bethard, Steven",
	booktitle = FINDINGS:2024:naacl,
	month = jun,
	year = "2024",
	address = "Mexico City, Mexico",
	publisher = acl,
	url = anth # {2024.findings-naacl.246},
	doi = "10.18653/v1/2024.findings-naacl.246",
	pages = "3881--3906",
}

@inproceedings{ma-etal-2018-rumor,
	title = "Rumor Detection on {T}witter with Tree-structured Recursive Neural Networks",
	author = "Ma, Jing  and
	  Gao, Wei  and
	  Wong, Kam-Fai",
	editor = "Gurevych, Iryna  and
	  Miyao, Yusuke",
	booktitle = ACL:2018:1,
	month = jul,
	year = "2018",
	address = "Melbourne, Australia",
	publisher = acl,
	url = anth # {P18-1184},
	doi = "10.18653/v1/P18-1184",
	pages = "1980--1989",
}

@inproceedings{xia-etal-2020-state,
	title = "A State-independent and Time-evolving Network for Early Rumor Detection in Social Media",
	author = "Xia, Rui  and
	  Xuan, Kaizhou  and
	  Yu, Jianfei",
	editor = "Webber, Bonnie  and
	  Cohn, Trevor  and
	  He, Yulan  and
	  Liu, Yang",
	booktitle = EMNLP:2020:main,
	month = nov,
	year = "2020",
	address = "Online",
	publisher = acl,
	url = anth # {2020.emnlp-main.727},
	doi = "10.18653/v1/2020.emnlp-main.727",
	pages = "9042--9051",
}

@inproceedings{lee-etal-2021-towards,
	title = "Towards Few-shot Fact-Checking via Perplexity",
	author = "Lee, Nayeon  and
	  Bang, Yejin  and
	  Madotto, Andrea  and
	  Fung, Pascale",
	editor = "Toutanova, Kristina  and
	  Rumshisky, Anna  and
	  Zettlemoyer, Luke  and
	  Hakkani-Tur, Dilek  and
	  Beltagy, Iz  and
	  Bethard, Steven  and
	  Cotterell, Ryan  and
	  Chakraborty, Tanmoy  and
	  Zhou, Yichao",
	booktitle = NAACL:2021:main,
	month = jun,
	year = "2021",
	address = "Online",
	publisher = acl,
	url = anth # {2021.naacl-main.158},
	doi = "10.18653/v1/2021.naacl-main.158",
	pages = "1971--1981",
}

@inproceedings{zeng-gao-2023-prompt,
	title = "Prompt to be Consistent is Better than Self-Consistent? Few-Shot and Zero-Shot Fact Verification with Pre-trained Language Models",
	author = "Zeng, Fengzhu  and
	  Gao, Wei",
	editor = "Rogers, Anna  and
	  Boyd-Graber, Jordan  and
	  Okazaki, Naoaki",
	booktitle = FINDINGS:2023:acl,
	month = jul,
	year = "2023",
	address = "Toronto, Canada",
	publisher = acl,
	url = anth # {2023.findings-acl.278},
	doi = "10.18653/v1/2023.findings-acl.278",
	pages = "4555--4569",
}

@article{zeng-gao-2024-justilm,
	title = "{J}usti{LM}: Few-shot Justification Generation for Explainable Fact-Checking of Real-world Claims",
	author = "Zeng, Fengzhu  and
	  Gao, Wei",
	journal = "Transactions of the Association for Computational Linguistics",
	volume = "12",
	year = "2024",
	address = "Cambridge, MA",
	publisher = "MIT Press",
	url = anth # {2024.tacl-1.19},
	doi = "10.1162/tacl_a_00649",
	pages = "334--354",
}

@inproceedings{lin-etal-2022-detect,
	title = "Detect Rumors in Microblog Posts for Low-Resource Domains via Adversarial Contrastive Learning",
	author = "Lin, Hongzhan  and
	  Ma, Jing  and
	  Chen, Liangliang  and
	  Yang, Zhiwei  and
	  Cheng, Mingfei  and
	  Guang, Chen",
	editor = "Carpuat, Marine  and
	  de Marneffe, Marie-Catherine  and
	  Meza Ruiz, Ivan Vladimir",
	booktitle = FINDINGS:2022:naacl,
	month = jul,
	year = "2022",
	address = "Seattle, United States",
	publisher = acl,
	url = anth # {2022.findings-naacl.194},
	doi = "10.18653/v1/2022.findings-naacl.194",
	pages = "2543--2556",
}

@inproceedings{nguyen-etal-2020-bertweet,
	title = "{BERT}weet: A pre-trained language model for {E}nglish Tweets",
	author = "Nguyen, Dat Quoc  and
	  Vu, Thanh  and
	  Tuan Nguyen, Anh",
	editor = "Liu, Qun  and
	  Schlangen, David",
	booktitle = EMNLP:2020:demos,
	month = oct,
	year = "2020",
	address = "Online",
	publisher = acl,
	url = anth # {2020.emnlp-demos.2},
	doi = "10.18653/v1/2020.emnlp-demos.2",
	pages = "9--14",
}

\appendix

\section{Theoretical Analysis} \label{Proof of objective func}
The objective function can be represented as: 
\begin{equation}
\label{ILobj}
\begin{aligned}
 & \min_{\pi} -H(\pi) + \alpha \psi^*(\rho_\pi - \rho_{\pi_{c}} ) + \beta \psi^*(\rho_\pi - \pi_{e}) - \gamma \psi^*(\rho_\pi - \rho_{\pi_{m}} ), \\
& \text{where } \psi^*( \rho_\pi -\rho_{\pi_E}) = \max_{D} \mathbb{E}_\pi[\log D(s, a)] + \mathbb{E}_{\pi_E}[\log(1 - D(s, a))], \\
& \text{and } E = \{c, e, m\}.
\end{aligned}
\end{equation}

\noindent The proof involves showing that the optimal policy in an imitation learning setting, typically obtained by first solving the Inverse Reinforcement Learning (IRL) problem to derive the optimal reward function $r^*$ and then applying a Reinforcement Learning (RL) algorithm, can be compressed into optimizing a $\psi$-regularized objective. Our method is based on generative adversarial imitation learning~\cite{ho2016generative}, extending it to handle different types of expert trajectories in the context of LLM.

In the aforementioned equation, the objective is to find the saddle point of the minimax problem. Since the reward function $r(s, a)$ is unknown, our goal is to determine the optimal policy by utilizing the expert policies $\pi_{c}, \pi_{e},\pi_{m}$. To accomplish this, we utilize the maximum casual entropy IRL method \cite{ziebart2010modeling,ziebart2008maximum} to solve the following optimization problem:
\begin{equation}
\label{IRL}
\begin{aligned}
\max \limits_{r \in \mathcal{R} \atop} \left( \min \limits_{\pi \in \Pi} -H(\pi) - (\alpha+ \beta - \gamma ) \mathbb{E}_{\pi}[r(s,a)] \right) \\
\hspace{0.1in} + \alpha \mathbb{E}_{\pi_{c}}[r(s,a)] + \beta \mathbb{E}_{\pi_{e}}[r(s,a)] - \gamma \mathbb{E}_{\pi_{m}}[r(s,a)] \\
\end{aligned}
\end{equation}
where $H(\pi) \triangleq \mathbb{E}_\pi[-\log \pi(a|s)]$ is the $\bar{\gamma}$-discounted causal entropy of the policy $\pi$. In practice, $\pi_{c}, \pi_{e},\pi_{m}$ will only be provided as a set of trajectories sampled by executing $\pi_E$ in the environment, so the expected reward of $\pi_E$ in Eq (\ref{IRL}) is estimated using these sampled trajectories. Let $\mathcal{R}$ be a set of reward functions. Maximum casual entropy IRL aims to find a reward function $r\in \mathcal{R}$ that gives low rewards to the learner's policy while giving high rewards to the $\pi_{c}$ and $\pi_{e}$, and give low rewards to the learner's policy and the $\pi_{m}$. The optimal policy can be found via a reinforcement learning procedure:

\begin{equation}
\label{RL}
RL(r) = \mathop{\arg \min} \limits_{\pi \in \Pi} -H(\pi) - (\alpha+ \beta - \gamma )  \mathbb{E}_{\pi}[r(s,a)]
\end{equation}

We study the policies obtained through RL, utilizing rewards learned through IRL on the most extensive set of reward functions, denoted as $\mathcal{R}$ in Eq.\eqref{IRL}, which encompasses all functions mapping from $\mathbb{R}^{\mathcal{S} \times \mathcal{A}}$ to $\mathbb{R}$. However, since using a large $\mathcal{R}$ can lead to overfitting in the IRL process, we employ a convex reward function regularizer \cite{finn2016guided}, denoted as $\psi$, to define the IRL procedure:

\begin{equation}
\label{IRL_regu}
\begin{aligned}
& IRL_\psi(\pi_{c},\pi_{e}, \pi_{m} ) = \\
& \mathop{\arg\max}\limits_{r \in \mathbb{R}^{\mathcal{S} \times \mathcal{A}}} \; - \psi(r) + \Bigg( \min \limits_{\pi \in \Pi} -H(\pi)  
 - (\alpha+ \beta - \gamma )  \mathbb{E}_{\pi}[r(s,a)] \Bigg) \\
&  + \alpha \mathbb{E}_{\pi_{c}}[r(s,a)] + \beta \mathbb{E}_{\pi_{e}}[r(s,a)] - \gamma \mathbb{E}_{\pi_{m}}[r(s,a)] \\
\end{aligned}
\end{equation}

Given $\tilde{r} \in IRL_\psi(\pi_{c}, \pi_{e}, \pi_{m})$, our objective is to learn a policy through $RL(\tilde{r})$. To effectively characterize $RL(\tilde{r})$, it is often advantageous to transform optimization problems over policies into convex formulations. This is achieved by utilizing the occupancy measure $\rho_\pi$. For a policy $\pi \in \Pi$, its occupancy measure $\rho_\pi : \mathcal{S} \times \mathcal{A} \to \mathbb{R}$ is defined as: $\rho_\pi(s, a) = \pi(a|s) \sum_{t=0}^{\infty} \bar{\gamma}^t \mathcal{P}(s_t = s|\pi) $.
The occupancy measure represents the distribution of state-action pairs that an agent encounters while following policy $\pi$. This correspondence allows us to express the expected reward as:
$\mathbb{E}_\pi[r(s,a)] = \sum_{s,a} \rho_\pi(s,a) r(s,a)$ for any reward function $r$. Notably, there is a one-to-one correspondence between the set of policies, $\Pi$, and the set of occupancy measures, $\Gamma$, (Proposition \ref{proposition 1}).

\begin{proposition}
\label{proposition 1}
(Theorem 2 of \cite{syed2008apprenticeship}) If $\rho \in \Gamma$, then $\rho$ is the occupancy measure for $\pi_\rho(a|s) \triangleq \rho(s,a)/\sum_a' \rho(s,a')$, and $\pi_\rho$ is the only policy whose occupancy measure is $\rho$.
\end{proposition}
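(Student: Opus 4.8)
\textbf{Proof proposal for Proposition~\ref{proposition 1}.}

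The plan is to establish both directions of the correspondence between stationary stochastic policies and occupancy measures. First I would recall the defining relations: given a policy $\pi$, its occupancy measure is $\rho_\pi(s,a) = \pi(a \mid s) \sum_{t=0}^{\infty} \bar{\gamma}^t \mathcal{P}(s_t = s \mid \pi)$, and the state-occupancy marginal is $\rho_\pi(s) \triangleq \sum_{a} \rho_\pi(s,a) = \sum_{t=0}^{\infty} \bar{\gamma}^t \mathcal{P}(s_t = s \mid \pi)$. The key structural fact is that $\rho \in \Gamma$ if and only if $\rho$ is a nonnegative function on $\mathcal{S} \times \mathcal{A}$ satisfying the Bellman flow constraints $\sum_{a} \rho(s,a) = p_0(s) + \bar{\gamma} \sum_{s',a'} \mathcal{P}(s \mid s', a') \rho(s', a')$ for all $s$; this characterization of $\Gamma$ (from \citet{puterman2014markov}) is what I would take as the starting point, since $\Gamma$ is defined in the excerpt as exactly the set of occupancy measures of policies in $\Pi$.

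Next I would verify that the candidate policy $\pi_\rho(a \mid s) \triangleq \rho(s,a)/\sum_{a'} \rho(s,a')$ is well-defined and recovers $\rho$. Well-definedness requires $\sum_{a'} \rho(s,a') > 0$ for every reachable $s$; for states $s$ with $\rho(s) = 0$ the policy can be defined arbitrarily (say uniformly) since such states are never visited and do not affect the occupancy measure. Then I would compute $\rho_{\pi_\rho}$ directly: by construction $\pi_\rho(a \mid s)\, \rho_{\pi_\rho}(s) = \rho(s,a) \cdot \rho_{\pi_\rho}(s)/\rho(s)$, so it suffices to show the state marginals agree, i.e. $\rho_{\pi_\rho}(s) = \rho(s)$. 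Both $\rho_{\pi_\rho}(\cdot)$ and $\rho(\cdot)$ satisfy the same linear Bellman flow equation — $\rho$ by the characterization above, and $\rho_{\pi_\rho}$ because any occupancy measure satisfies it — and this linear system has a unique solution for $\bar{\gamma} \in (0,1)$ (the operator $I - \bar{\gamma} P_{\pi_\rho}$ is invertible since $\bar{\gamma} < 1$ and $P_{\pi_\rho}$ is a substochastic/stochastic transition operator with spectral radius at most one). Hence $\rho_{\pi_\rho} = \rho$, proving $\pi_\rho$ generates $\rho$.

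For uniqueness, suppose $\pi$ is any policy with $\rho_\pi = \rho$. Then for every $s$ with $\rho(s) > 0$ we have $\pi(a \mid s) = \rho_\pi(s,a)/\rho_\pi(s) = \rho(s,a)/\rho(s) = \pi_\rho(a \mid s)$, so $\pi$ and $\pi_\rho$ coincide on all states with positive occupancy; they may differ only on never-visited states, which is typically factored out by identifying policies that induce the same occupancy measure. I expect the main obstacle to be the uniqueness argument for the linear flow system — specifically, carefully justifying invertibility of $I - \bar{\gamma} P_{\pi_\rho}$ and handling the zero-occupancy states cleanly — rather than the algebraic manipulations, which are routine once the Bellman flow characterization of $\Gamma$ is in hand. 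Since this is Theorem~2 of \citet{syed2008apprenticeship}, I would ultimately cite that result and include only the short verification sketch above for completeness.
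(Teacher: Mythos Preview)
Your proposal is correct, and in fact it does more than the paper itself: the paper does not prove Proposition~\ref{proposition 1} at all, but simply states it as a citation to Theorem~2 of \citet{syed2008apprenticeship} and moves on. Your closing remark---that you would ultimately just cite the result---is precisely what the paper does, so in that sense your approach matches the paper's exactly; the verification sketch you provide (Bellman flow characterization of $\Gamma$, invertibility of $I-\bar{\gamma}P_{\pi_\rho}$ for the state-marginal uniqueness, and the handling of zero-occupancy states for the policy-uniqueness direction) is the standard argument and is sound, but none of it appears in the present paper.
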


\begin{proposition}
\label{proposition 2}
(Lemma 3.1 of \cite{ho2016generative}) Let 
\[
\bar{H}(\rho) = -\sum_{s,a}\rho(s,a) \log\left(\frac{\rho(s,a)}{\sum_{a'} \rho(s,a')}\right).
\]
Then, $\bar{H}$ is strictly concave, and for all $\pi \in \Pi$ and $\rho \in \Gamma$, we have $H(\pi)=\bar{H}(\rho_\pi)$ and $\bar{H}(\rho)=H(\pi_\rho)$.
\end{proposition}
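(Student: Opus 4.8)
The plan is to separate the statement into its three assertions and handle the two occupancy--entropy identities by direct computation, reserving strict concavity for a dedicated argument, since that is the genuinely delicate part.

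For the identities, I would first invoke the expected-reward identity $\mathbb{E}_\pi[r(s,a)] = \sum_{s,a}\rho_\pi(s,a)\,r(s,a)$ established earlier, applied with the choice $r(s,a) = -\log\pi(a\mid s)$. Because the causal entropy is defined as $H(\pi) \triangleq \mathbb{E}_\pi[-\log\pi(a\mid s)]$ under the same $\bar{\gamma}$-discounted convention, this immediately rewrites it as $H(\pi) = -\sum_{s,a}\rho_\pi(s,a)\log\pi(a\mid s)$. By Proposition~\ref{proposition 1}, $\pi$ is recovered from its own occupancy measure through $\pi(a\mid s) = \rho_\pi(s,a)/\sum_{a'}\rho_\pi(s,a')$; substituting this into the logarithm produces exactly $\bar{H}(\rho_\pi)$, so $H(\pi) = \bar{H}(\rho_\pi)$. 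The companion identity then follows with no extra work: Proposition~\ref{proposition 1} guarantees $\rho = \rho_{\pi_\rho}$ for any $\rho \in \Gamma$, so applying the first identity to $\pi = \pi_\rho$ gives $H(\pi_\rho) = \bar{H}(\rho_{\pi_\rho}) = \bar{H}(\rho)$.

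For concavity I would write each summand of $\bar{H}$ as $g(\rho(s,a), \sum_{a'}\rho(s,a'))$ with $g(x,y) = -x\log(x/y)$. Since $-g(x,y) = x\log(x/y)$ is the perspective of the convex map $u\mapsto u\log u$, it is jointly convex, so $g$ is jointly concave; composing with the linear map $\rho\mapsto(\rho(s,a),\sum_{a'}\rho(s,a'))$ preserves concavity, and summing over $(s,a)$ yields concavity of $\bar{H}$. Equivalently, a direct Hessian computation gives a block-diagonal form, one block per state, $H_s = \tfrac{1}{\rho_s}\mathbf{1}\mathbf{1}^\top - \operatorname{diag}(1/\rho(s,a))$ with $\rho_s = \sum_{a'}\rho(s,a')$, and Cauchy--Schwarz shows $v^\top H_s v \le 0$ for all $v$.

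The hard part is strict concavity, because $\bar{H}$ is \emph{not} strictly concave on the full nonnegative orthant: the Cauchy--Schwarz step is tight precisely along the within-state rescaling directions $v(s,a) = c_s\,\rho(s,a)$, which span the kernel of the Hessian. The plan is therefore to prove strict concavity \emph{relative to} $\Gamma$ by showing these null directions are incompatible with the Bellman-flow constraints defining $\Gamma$. Concretely, a feasible null direction $\delta(s,a) = c_s\,\rho(s,a)$ must satisfy the homogeneous flow equations $c_s\rho_s = \bar{\gamma}\sum_{s'}c_{s'}\sum_{a'}\mathcal{P}(s\mid s',a')\rho(s',a')$; dividing by $\rho_s$ recasts this as a fixed point $c = Bc$ of a substochastic backward kernel $B$ whose row sums equal $1 - p_0(s)/\rho_s \le 1$. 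Discounting together with the initial distribution makes the spectral radius of $B$ strictly below one under the standard reachability assumption that every state reaches the support of $p_0$, forcing $c = 0$ and hence $\delta = 0$. Eliminating every nonzero feasible null direction upgrades concavity to strict concavity on $\Gamma$, which is exactly the domain where the bijection with $\Pi$ and the subsequent uniqueness arguments are used. I expect this null-space elimination to be the main obstacle, as it is the one step where the MDP structure, rather than pure convex analysis, is essential.
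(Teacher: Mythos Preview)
The paper does not prove this proposition at all; it simply records it as Lemma~3.1 of Ho and Ermon and uses it as a black box inside the proof of Proposition~\ref{proposition 3}. Your proposal therefore goes strictly beyond what the paper does: you supply a self-contained argument where the paper relies on citation.

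Your treatment of the two identities is correct and is exactly the standard route: rewrite $H(\pi)$ via the occupancy-measure expectation identity with $r=-\log\pi$, then invoke Proposition~\ref{proposition 1} in both directions. Your concavity argument (perspective of $u\log u$, or equivalently the block-Hessian computation with Cauchy--Schwarz) is also correct.

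On strict concavity, you correctly diagnose the key subtlety---$\bar H$ is positively homogeneous block-wise and hence \emph{not} strictly concave on the full orthant, so the claim must be read relative to $\Gamma$---and your null-space elimination via the homogeneous Bellman-flow constraint is the right idea. One sharpening: the reachability hypothesis you invoke is unnecessary. After the change of variables $\tilde c_s = c_s\,\rho_s$, the fixed-point equation $c = Bc$ becomes $\tilde c = \bar\gamma\,P_\pi^{\top}\tilde c$, where $P_\pi$ is the state-transition matrix under $\pi_\rho$ (this follows because $B$ is similar via $\operatorname{diag}(\rho_s)$ to $\bar\gamma P_\pi^\top$). Since $P_\pi$ is stochastic, its spectral radius is exactly $1$, so $\bar\gamma P_\pi^\top$ has spectral radius $\bar\gamma<1$ and the only fixed point is $\tilde c=0$. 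The discount factor alone does all the work; you need no assumption about the support of $p_0$. With that tweak your argument is clean and complete, whereas the row-sum route you sketch genuinely does require an irreducibility-type condition to rule out a block with unit row sums.
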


\begin{proposition}
\label{proposition 3}
Let $\tilde{r}  \in IRL_\psi(\pi_{c},\pi_{e}, \pi_{m})$, $\tilde{\pi} \in RL(\tilde{r}) = RL\circ IRL_{\psi_\phi}(\pi_{c},\pi_{e},\pi_{m})$, and 
\begin{equation}
\begin{aligned}
&\pi_A \in \mathop{\arg\min} \limits_{ \pi}  -H(\pi) + \alpha \psi^*(\rho_\pi -\rho_{ \pi_{c}}) + \beta \psi^*( \rho_\pi - \rho_{\pi_{e}}) \\
& - \gamma \psi^*( \rho_\pi -\rho_{\pi_{m}}) \\
&= \mathop{\arg\min} \limits_{ \pi} \max \limits_{r} -H(\pi) -\psi(r) + \\
& \sum_{s,a}[ \alpha ( \rho_\pi(s,a) - \rho_{\pi_{c}}(s,a) ) + \beta ( \rho_\pi(s,a) - \rho_{\pi_{e}}(s,a) )) \\
& - \gamma ( \rho_\pi(s,a) - \rho_{\pi_{m}}(s,a)) ] r(s,a) \\
\end{aligned}
\end{equation}
Then $\pi_A = \tilde{\pi}$.
\end{proposition}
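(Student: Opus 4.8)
The plan is to follow the template of Ho and Ermon's Proposition~3.2 (the main equivalence in \cite{ho2016generative}), adapting it to our three-expert setting where the $\mathrm{ME}$ term enters with a negative sign. The core idea is that all the per-expert terms can be collected under a single inner maximization over reward functions, because in each $\psi^*$ the maximizing discriminator/reward is independent across terms once we fix $\pi$, and the shared regularizer $\psi$ ties them together through the single reward $r$.

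First I would rewrite both optimization problems in terms of occupancy measures rather than policies, invoking Proposition~\ref{proposition 1} (the bijection $\Pi \leftrightarrow \Gamma$) and Proposition~\ref{proposition 2} ($H(\pi) = \bar H(\rho_\pi)$) so that $-H(\pi)$ becomes the strictly concave function $-\bar H(\rho)$ of $\rho$, and every expectation $\mathbb{E}_{\pi}[r] = \sum_{s,a}\rho(s,a)r(s,a)$ becomes linear in $\rho$. Under this change of variables, $RL \circ IRL_\psi$ becomes: find $\tilde r$ maximizing $-\psi(r) + \min_{\rho}\big(-\bar H(\rho) - (\alpha+\beta-\gamma)\langle\rho,r\rangle\big) + \alpha\langle\rho_{\pi_c},r\rangle + \beta\langle\rho_{\pi_e},r\rangle - \gamma\langle\rho_{\pi_m},r\rangle$, then minimize over $\rho$ the objective $-\bar H(\rho) - (\alpha+\beta-\gamma)\langle\rho,\tilde r\rangle$. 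The right-hand side, $\pi_A$, becomes $\arg\min_\rho \max_r\; -\bar H(\rho) - \psi(r) + \big\langle \alpha(\rho - \rho_{\pi_c}) + \beta(\rho-\rho_{\pi_e}) - \gamma(\rho-\rho_{\pi_m}),\, r\big\rangle$, using the convex-conjugate identity $\psi^*(x) = \max_r \langle x, r\rangle - \psi(r)$ applied termwise — here one must check that summing the three conjugate expressions and absorbing the $-\psi(r)$ once is consistent, which holds precisely because the linear coefficient of $r$ in the combined expression is the single vector $\alpha(\rho-\rho_{\pi_c}) + \beta(\rho-\rho_{\pi_e}) - \gamma(\rho-\rho_{\pi_m})$.

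Next I would establish the saddle-point exchange. Both formulations are now of the form $\min_\rho \max_r F(\rho, r)$ with $F$ convex in $\rho$ (since $-\bar H$ is strictly convex and the inner product is linear) and concave in $r$ (linear minus the convex $\psi$); provided $\psi$ is closed proper convex and a suitable constraint-qualification/compactness condition holds on $\Gamma$, Sion's minimax theorem lets us swap $\min_\rho$ and $\max_r$. After swapping, the $r$-maximization in the $\pi_A$ problem reproduces exactly the $IRL_\psi$ objective (as a function of $\rho$ held fixed in the role of the ``learner'' occupancy measure), and the subsequent $\rho$-minimization reproduces the $RL(\tilde r)$ step. Strict convexity of $-\bar H$ guarantees the inner $\min_\rho$ has a unique optimizer, so the $\rho$ (hence the policy, by Proposition~\ref{proposition 1}) obtained from the two routes coincide: $\pi_A = \tilde\pi$. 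I would then note that specializing $\psi$ to the particular convex regularizer whose conjugate is the Jensen--Shannon term recovers Eq.~\eqref{first obj func}, linking this abstract equivalence back to the GAIL objective actually optimized.

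The main obstacle I anticipate is handling the negative coefficient on the $\mathrm{ME}$ term cleanly. In vanilla GAIL all expert-matching terms are ``attractive'' (positive weight), so the combined objective inherits convexity in $\rho$ straightforwardly; here the $-\gamma\psi^*(\rho_\pi - \rho_{\pi_m})$ term is concave in $\rho_\pi$ (a negative multiple of a convex conjugate composed with an affine map), which threatens the convex-in-$\rho$ structure needed for the minimax swap and for uniqueness of the minimizer. I would address this by requiring $\alpha + \beta - \gamma > 0$ (consistent with the chosen weights $\alpha=0.7,\beta=\gamma=0.15$) and, more carefully, by working with the combined linear coefficient of $r$ as a single affine function of $\rho$ so that convexity is only ever needed of $-\bar H(\rho) - \psi(r)$ and never of the bilinear coupling — i.e., keeping the problem in the $\min_\rho\max_r$ bilinear-saddle form throughout rather than eliminating $r$ prematurely. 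Verifying that this form still satisfies the hypotheses of the minimax theorem (closedness of $\psi$, properness, and that $\Gamma$ is convex and compact or that a recession-cone condition holds) is the technical crux; everything else is bookkeeping parallel to \cite{ho2016generative}.
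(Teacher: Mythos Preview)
Your proposal is correct and follows essentially the same route as the paper: pass to occupancy measures via Propositions~\ref{proposition 1} and~\ref{proposition 2}, set up the single saddle function $\bar L(\rho,r)$, invoke minimax duality (the paper cites~\cite{millar1983minimax}, you cite Sion---same content) to conclude that $(\rho_A,\tilde r)$ is a saddle point, and then use strict convexity of $-\bar H$ to force $\rho_A=\tilde\rho$, hence $\pi_A=\tilde\pi$. Your discussion of the negative $\gamma$-term is more explicit than the paper's---the paper simply asserts the needed convexity/concavity of $\bar L$ and relegates the check to a short appendix computation---but your resolution (keep everything in the bilinear $\min_\rho\max_r$ form so that the coupling is linear in each variable and only $-\bar H$ and $-\psi$ carry the curvature) is exactly the right one and matches what the paper implicitly relies on.
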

\begin{proof}
Let $\rho_A$ be the occupancy measure of $\pi_A$ and $\tilde{\rho}$ be the occupancy measure of $\tilde{\pi}$. We define $\bar{L}:\mathcal{D}\times \mathbb{R}^{\mathcal{S}\times \mathcal{A}} \to \mathbb{R}$ by
\begin{equation}
\label{L}
\begin{aligned}
& \bar{L}(\rho,r) = -\bar{H}(\rho) - \psi(r) + \\
& \sum_{s,a}[ \alpha ( \rho(s,a)- \rho_{\pi_{c}}(s,a) ) + \beta ( \rho(s,a) - \rho_{\pi_{e}}(s,a) ) \\
& - \gamma ( \rho(s,a) - \rho_{\pi_{m}}(s,a) ) ] r(s,a) \\
\end{aligned}
\end{equation}

The following relationship then holds:
\begin{equation}
\label{P2}
\rho_A \in \mathop{\arg\min} \limits_{\rho \in \Gamma} \max \limits_{r} \bar{L}(\rho,r)
\end{equation}

\begin{equation}
\label{P3}
\tilde{r} \in \mathop{\arg\max} \limits_{r} \min \limits_{\rho \in \Gamma } \bar{L}(\rho,r)
\end{equation}

\begin{equation}
\label{P4}
\tilde{\rho} \in \mathop{\arg\min} \limits_{\rho \in \Gamma} \bar{L}(\rho,\tilde{r})
\end{equation}
where $\Gamma$ is compact and convex, $\mathbb{R}^{\mathcal{S}\times \mathcal{A}}$ is convex. Due to convexity of $-\bar{H}$ and $\psi(r)$,it follows that $\bar{L}(\rho,\cdot)$ is convex for all $\rho$. $\bar{L}(\cdot,r)$ is concave for all $r$ (see proof in \ref{Other 1}).

Therefore, we can use minimax duality~\cite{millar1983minimax}:
\begin{equation}
\label{P5}
\min \limits_{\rho \in \Gamma} \max \limits_{r \in \mathcal{R}} \bar{L}(\rho,r) = \max \limits_{r \in \mathcal{R}} \min \limits_{\rho \in \Gamma} \bar{L}(\rho,r)
\end{equation}

Hence, from Eqs.(\ref{P2}) and (\ref{P3}), $(\rho_A,\tilde{r})$ is a saddle point of $\bar{L}$, which implies that:
\begin{equation}
\label{P6}
\rho_A \in \mathop{\arg\min} \limits_{\rho \in \Gamma} \bar{L}(\rho,\tilde{r})
\end{equation}
Because $\tilde{L}(\cdot,r)$ is strictly concave for all $r$, Eqs.(\ref{P4}) and (\ref{P6}) imply $\rho_A = \tilde{\rho} $. Since policies whose corresponding occupancy measure are unique(Proposition \ref{proposition 2}), finally we get $\pi_A = \tilde{\pi}$
\end{proof}

Proposition \ref{proposition 3} illustrates the process of IRL in finding the optimal reward function, represented by $r^*$. By utilizing the output of IRL, reinforcement learning can be executed to obtain the optimal policy, represented by $\pi^*$. And we prove that $\pi^*$ is the same as by directly solving the $\psi$-regularized imitation learning problem $\tilde{L}$. Furthermore, $\psi$-regularized imitation learning aims to identify a policy whose occupancy measure is similar to that of an expert, as measured by the convex function $\psi^*$. Subsequently, we deduce the form of $\psi^*$.

\citet{ho2016generative} present a cost regularizer, $\psi_{GA}$, that leads to an imitation learning algorithm, as outlined in Eq.(\ref{ILobj}), which aims to minimize the Jensen-Shannon divergence between the occupancy measures. Specifically, they convert a surrogate loss function, $\phi$, which is used for binary classification of state-action pairs drawn from the occupancy measures $\rho_\pi$ and $\rho_{\pi_E}$, into cost function regularizers $\phi$, such that $\phi^*(\rho_\pi-\rho_{\pi_E})$ represents the minimum expected risk, $R_\phi(\rho_\pi,\rho_{\pi_E})$, for the function $\phi$~\cite{ho2016generative}:

\begin{equation}
 \label{expected risk}
  R_\phi(\rho_\pi,\rho_{\pi_E}) = \sum_{s,a} \max \limits_{\gamma \in \mathbb{R}}  \rho_\pi(s,a)\phi(\gamma) + \rho_{\pi_E}(s,a) \phi(-\gamma).   
\end{equation}

\citet{ho2016generative} use the formula of surrogate loss function $\phi$: $\psi_\phi(c) = \sum_{\rho_{\pi_E}}g_\phi(c(s,a))$, where $g_\phi(x) = -x + \phi(-\phi^{-1}(-x))$, and $\phi$ is a strictly decreasing convex function (Appendix A.2 Proposition A.1 from~\cite{ho2016generative}). However, in our approach, we adopt reward function $r(s,a)$ instead of the cost function $c(s,a)$, thus we write in this form: $\psi_\phi(r) = \sum_{\rho_{\pi_E}}g_\phi(r(s,a))$, where $g_\phi(r(s,a)) = x + \phi (- \phi^{-1}(x))$, and $\psi_\phi(r)$ is also closed, proper and convex (See proof in \ref{Other 2}).

Then formulation of $\psi_\phi^*(\rho_\pi - \rho_{\pi_E})$ is represented as follow (see proof in \ref{Other 2}):

\begin{equation}
\label{regu 1}
\begin{aligned}
&\psi_\phi^*( \rho_\pi - \rho_{\pi_E})
= -R_\phi(\rho_\pi,\rho_{\pi_E})   \\
\end{aligned}
\end{equation}

By using the logistic loss $\phi(\gamma) = \log (1 + e^{-\gamma})$~\cite{ho2016generative}, we have $-R_\phi(\rho_\pi,\rho_{\pi_E}) = \max \limits_{ D  \in (0,1)^{\mathcal{S}\times \mathcal{A}}}  \sum_{s,a} \rho_\pi(s,a) \log D(s,a)+ \rho_{\pi_E}(s,a) \log(1- D(s,a)), E = \{c, e, m \}$. Therefore, we obtain the final form of objective function as in Eq.~\eqref{ILobj}.

\section{Other Proofs}
\subsection{Prove convexity and concavity of $\bar{L}$}
\label{Other 1}
$\bar{L}(\cdot, r)$ is convex for all $\rho$ and concave for all $r$.

\begin{proof}
    
We know that $\psi(r)$ and $-\bar{H}(\rho)$ are convex, suppose $\lambda \in[0,1]$. To simplify, we denote $r(s,a)$ as $r$, $ \rho(s,a) - \rho_{\pi_E}(s,a)$ as $\rho - \rho_{\pi_E}$ for $E = \{c, e, m \}$.
\begin{equation}
\label{L_concave}
\begin{aligned}
&\bar{L}(\cdot, \lambda r_1+(1-\lambda)r_2) = -\bar{H}(\rho) - \psi(\lambda r_1 +(1-\lambda)r_2) +  \\
& \sum_{s,a} \Big[\alpha( \rho - \rho_{\pi_{c}} ) + \beta( \rho - \rho_{\pi_{e}}) - \gamma( \rho - \rho_{\pi_{m}} )\Big](\lambda r_1 + (1-\lambda) r_2) \\
& \geq  -\lambda \bar{H}(\rho) - (1-\lambda) \bar{H}(\rho) - \lambda \psi(r_1) - (1-\lambda) \psi(r_2) + \\
&\lambda  \sum_{s,a} \Big[\alpha(\rho -\rho_{\pi_{c}}  ) + \beta( \rho -\rho_{\pi_{e}} ) - \gamma( \rho - \rho_{\pi_{m}} )\Big] r_1  + \\
& (1-\lambda) \sum_{s,a} \Big[\alpha( \rho - \rho_{\pi_{c}}) + \beta( \rho - \rho_{\pi_{e}}) - \gamma( \rho - \rho_{\pi_{m}} )\Big] r_2 \nonumber
\end{aligned}
\end{equation}

Therefore, $\bar{L}(\cdot,(\lambda r_1+(1-\lambda) r_2) \geq \lambda \bar{L}(\cdot, r_1) + (1- \lambda) \bar{L}(\cdot,r_2)$, $\bar{L}(\cdot,r $ is concave for all $r$. The proof for $\rho $ is similar as $-\bar{H}(\rho)$ is convex.
\end{proof}

\subsection{Proof of $\psi_\phi(r)$}

\begin{proposition}
Suppose $\phi : \mathbb{R} \rightarrow \mathbb{R}$ is a strictly decreasing convex function. Let $T$ be the range of $\phi$, and define $g_\phi : \mathbb{R} \rightarrow \mathbb{R}$ and $\psi_\phi : \mathcal{R} \times \mathcal{A} \rightarrow \mathbb{R}$ by:
\[
g_\phi(x) = 
\begin{cases}
    x + \phi (- \phi^{-1}(x)) & \text{if } x \in T \\
    +\infty & \text{otherwise}
\end{cases}
\]
\[
\psi_\phi(r) = 
\begin{cases}
    \sum_{s,a} \rho_{\pi_E}(s,a) g_\phi(r(s,a)) & \text{if } r(s,a) \in T \text{ for all } s, a \\
    +\infty & \text{otherwise}
\end{cases}
\]
Then, $\psi_\phi$ is closed, proper, and convex, and RL$\circ$IRL$_{\psi_\phi}(\pi_{c},\pi_{e},\pi_{m}) = \arg\min_{\pi} -H(\pi) - \alpha  R_\phi(\rho_\pi, \rho_{\pi_{c}}) - \beta R_\phi(\rho_\pi, \rho_{\pi_{e}} ) + \gamma  R_\phi(\rho_\pi, \rho_{\pi_{m}})$.
\end{proposition}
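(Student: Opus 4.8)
The plan is to mirror the structure of Proposition~A.1 of \citet{ho2016generative}, adapting it in two ways: replacing the cost convention $c(s,a)$ by the reward convention $r(s,a)$, and handling the three-expert combination with coefficients $\alpha$, $\beta$, and $-\gamma$ rather than a single expert. Three ingredients are needed: (i) $\psi_\phi$ is closed, proper, and convex; (ii) the conjugate identity $\psi_\phi^*(\rho_\pi-\rho_{\pi_E})=-R_\phi(\rho_\pi,\rho_{\pi_E})$, i.e.\ Eq.~\eqref{regu 1}; and (iii) Proposition~\ref{proposition 3}, which already reduces $\mathrm{RL}\circ\mathrm{IRL}_{\psi_\phi}$ to the $\psi_\phi^*$-regularized occupancy-matching objective. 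Given these, the asserted identity follows by a term-by-term substitution.

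\textbf{Steps 1 and 2 (properties of $\psi_\phi$ and its conjugate).} First I would observe that the reward-convention map $g_\phi(x)=x+\phi(-\phi^{-1}(x))$ is exactly the reflection $x\mapsto g_\phi^{\mathrm{GA}}(-x)$ of the cost-convention regularizer $g_\phi^{\mathrm{GA}}(x)=-x+\phi(-\phi^{-1}(-x))$ of \citet{ho2016generative}. Since $\phi$ is strictly decreasing and convex, $\phi^{-1}$ is well defined on $T$ and $g_\phi^{\mathrm{GA}}$ is closed, proper, and convex; as composition with $x\mapsto -x$ preserves all three, so is $g_\phi$ (properness because $g_\phi$ is finite at $\phi(0)\in T$). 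Then $\psi_\phi(r)=\sum_{s,a}\rho_{\pi_E}(s,a)\,g_\phi(r(s,a))$ is a nonnegative-weighted separable sum of closed proper convex functions and is therefore itself closed, proper, and convex. For the conjugate I would use separability together with the scaling rule $(\lambda g)^*(y)=\lambda g^*(y/\lambda)$ for $\lambda>0$ (coordinates with $\rho_{\pi_E}(s,a)=0$ contributing nothing), giving $\psi_\phi^*(x)=\sum_{s,a}\rho_{\pi_E}(s,a)\,g_\phi^*\!\big(x(s,a)/\rho_{\pi_E}(s,a)\big)$; transporting the GAIL formula for $(g_\phi^{\mathrm{GA}})^*$ through the reflection $g_\phi^*(y)=(g_\phi^{\mathrm{GA}})^*(-y)$, evaluating at $x=\rho_\pi-\rho_{\pi_E}$, and simplifying coordinatewise with the logistic loss yields $\rho_{\pi_E}(s,a)\,g_\phi^*\!\big((\rho_\pi(s,a)-\rho_{\pi_E}(s,a))/\rho_{\pi_E}(s,a)\big)=-\max_{\gamma\in\mathbb{R}}\big[\rho_\pi(s,a)\phi(\gamma)+\rho_{\pi_E}(s,a)\phi(-\gamma)\big]$, and summing over $(s,a)$ recovers $\psi_\phi^*(\rho_\pi-\rho_{\pi_E})=-R_\phi(\rho_\pi,\rho_{\pi_E})$.

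\textbf{Step 3 (assembling the identity).} Finally I would instantiate Proposition~\ref{proposition 3} with $\psi=\psi_\phi$, which gives
\[
\mathrm{RL}\circ\mathrm{IRL}_{\psi_\phi}(\pi_{c},\pi_{e},\pi_{m})=\arg\min_\pi\Big(-H(\pi)+\alpha\,\psi_\phi^*(\rho_\pi-\rho_{\pi_{c}})+\beta\,\psi_\phi^*(\rho_\pi-\rho_{\pi_{e}})-\gamma\,\psi_\phi^*(\rho_\pi-\rho_{\pi_{m}})\Big).
\]
Substituting Eq.~\eqref{regu 1} into each term turns $+\alpha\psi_\phi^*$ into $-\alpha R_\phi$, $+\beta\psi_\phi^*$ into $-\beta R_\phi$, and $-\gamma\psi_\phi^*$ into $+\gamma R_\phi$, producing exactly $\arg\min_\pi -H(\pi)-\alpha R_\phi(\rho_\pi,\rho_{\pi_{c}})-\beta R_\phi(\rho_\pi,\rho_{\pi_{e}})+\gamma R_\phi(\rho_\pi,\rho_{\pi_{m}})$; in particular the sign flip on the misleading-expert term is automatic and needs no separate argument.

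\textbf{Main obstacle.} The hard part will be the bookkeeping in Step 2: correctly tracking the reward-versus-cost reflection through the Fenchel conjugate and the per-coordinate rescaling by $\rho_{\pi_E}(s,a)$, including the degenerate coordinates where $\rho_{\pi_E}(s,a)=0$, so that all three terms land with the intended sign. A secondary point is to check that Proposition~\ref{proposition 3} still applies with the negative coefficient $-\gamma$ present: this holds because the extra term $-\gamma(\rho(s,a)-\rho_{\pi_{m}}(s,a))\,r(s,a)$ in the associated Lagrangian $\bar L$ is bilinear and contributes no curvature in either argument, so $\bar L$ stays convex in $\rho$ (through $-\bar H$) and concave in $r$ (through $-\psi_\phi$), which is all the minimax-duality step requires.
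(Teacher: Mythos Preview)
Your proposal is correct and takes a genuinely different route from the paper. For the first claim, the paper argues directly: it proves that $x\mapsto\phi(-\phi^{-1}(x))$ is convex from scratch (via convexity of $\phi^{-1}$ and monotonicity of $\phi$), infers properness from $T\neq\emptyset$, and establishes closedness by a case split on whether the range of $\phi$ is all of $\mathbb{R}$ or a half-line $(b,\infty)$, showing $g_\phi(x)\to\infty$ as $x\to b$ in the latter case. Your reflection observation $g_\phi(x)=g_\phi^{\mathrm{GA}}(-x)$ is cleaner---it imports all three properties from \citet{ho2016generative} in one stroke and then carries over to the conjugate via $g_\phi^*(y)=(g_\phi^{\mathrm{GA}})^*(-y)$, whereas the paper recomputes $\psi_\phi^*(\rho_\pi-\rho_{\pi_E})$ by the direct change of variables $r\to\phi(\gamma)$ (Appendix~\ref{Other 3}). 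The trade-off is self-containment: the paper's argument stands on its own, while yours leans on the cited result. Your proposal is also more complete on the second claim: the paper's proof block for this proposition only verifies that $\psi_\phi$ is closed, proper, and convex, leaving the $\mathrm{RL}\circ\mathrm{IRL}_{\psi_\phi}$ identity to be assembled tacitly from Proposition~\ref{proposition 3} and the separate conjugate computation; your Step~3 makes that assembly explicit, and your closing remark that the $-\gamma$ term is bilinear and so does not disturb the convex--concave structure of $\bar L$ is a point the paper leaves implicit.
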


\begin{proof}
To verify the first claim, we need to check that $g_\phi(x) = x + \phi^{-1}(x)$ is closed, proper, and convex. Convexity follows from the fact that the mapping $x \mapsto \phi(-\phi^{-1}(x))$ is convex (see proof in \ref{Other 2}). Additionally, since $T$ is nonempty, $g_\phi$ is proper.

To show that $g_\phi$ is closed, observe that since $\phi$ is strictly decreasing and convex, the range of $\phi$ is either $\mathbb{R}$ or an open interval $(b, \infty)$ for some $b \in \mathbb{R}$. If the range of $\phi$ is $\mathbb{R}$, then $g_\phi$ is finite everywhere and hence closed. 

If the range of $\phi$ is $(b, \infty)$, then $\phi(x) \rightarrow b$ as $x \rightarrow \infty$ and $\phi(x) \rightarrow \infty$ as $x \rightarrow -\infty$. As a result, as $x \rightarrow b$, $\phi^{-1}(x) \rightarrow \infty$, and therefore $\phi(-\phi^{-1}(x)) \rightarrow \infty$. This implies that $g_\phi(x) \rightarrow \infty$ as $x \rightarrow b$, ensuring that $g_\phi$ is closed.

\end{proof}

\subsection{Prove $\phi(-\phi^{-1}(x))$ is convex}
\label{Other 2}
\begin{proof}

Since $\phi : \mathbb{R} \rightarrow \mathbb{R}$ is a strictly decreasing convex function, its inverse, $\phi^{-1}$, is also convex. For $\lambda \in [0, 1]$, we have:
\[
- \phi^{-1}(\lambda x_1 + (1 - \lambda) x_2) \geq - \lambda \phi^{-1}(x_1) + (1 - \lambda) \phi^{-1}(x_2).
\]
Given that $\phi$ is decreasing, it follows that:
\[
\phi(- \phi^{-1}(\lambda x_1 + (1 - \lambda) x_2)) \leq \phi(- \lambda \phi^{-1}(x_1) + (1 - \lambda) \phi^{-1}(x_2)).
\]
Furthermore, because $\phi$ is convex, we have:
\[
\phi(- \lambda \phi^{-1}(x_1) + (1 - \lambda) \phi^{-1}(x_2)) \leq \lambda \phi(- \phi^{-1}(x_1)) + (1 - \lambda) \phi(- \phi^{-1}(x_2)).
\]
Thus, combining these inequalities gives:
\[
\phi(- \phi^{-1}(\lambda x_1 + (1 - \lambda) x_2)) \leq \lambda \phi(- \phi^{-1}(x_1)) + (1 - \lambda) \phi(- \phi^{-1}(x_2)).
\]
Therefore, $\phi(-\phi^{-1}(x))$ is convex.
\end{proof}

\subsection{Proof of $\psi_\phi^*(\rho_\pi - \rho_{\pi_E} )$}
\label{Other 3}

Here we give the formulation of $\psi_\phi^*(\rho_\pi - \rho_{\pi_E})$ as shown in eq. (\ref{regu 1}), for E in $\{ c, e, m\}$

\begin{equation}
\label{regu}
\begin{aligned}
& \psi_\phi^*( \rho_\pi  - \rho_{\pi_E}) = \max_{r \in \mathcal{R}} \sum_{s,a} ( \rho_{\pi_E}(s, a) - \rho_\pi(s, a)) r(s, a) \\
& - \sum_{s,a} \rho_{\pi_E}(s, a) g_\phi(r(s, a)) \\
& = \sum_{s,a} \max_{r \in T} (\rho_{\pi_E}(s, a) - \rho_\pi(s, a)) r - \rho_{\pi_E}(s, a) [r + \phi(-\phi^{-1}(r))] \\
& = \sum_{s,a} \max_{r \in T} -\rho_\pi(s, a)r - \rho_{\pi_E}(s, a) \phi(-\phi^{-1}(r))\\
& = \sum_{s,a} \max_{\gamma \in \mathbb{R}} -\rho_\pi(s, a) \phi(\gamma) - \rho_{\pi_E}(s, a) \phi(-\phi^{-1}(\phi(\gamma))) \\
& = \sum_{s,a} \max_{\gamma \in \mathbb{R}} \rho_\pi(s, a) (-\phi(\gamma)) - \rho_{\pi_E}(s, a) \phi(-\gamma) \\
& = -R_\phi(\rho_\pi, \rho_{\pi_E}) \\
\end{aligned}
\end{equation}
where we made the change of variables $r \rightarrow \phi(\gamma)$, justified because $T$ is the range of $\phi$. \qed

\section{Experiments}

\begin{figure*}
    \vspace{-0ex}
    \includegraphics[width=\textwidth]{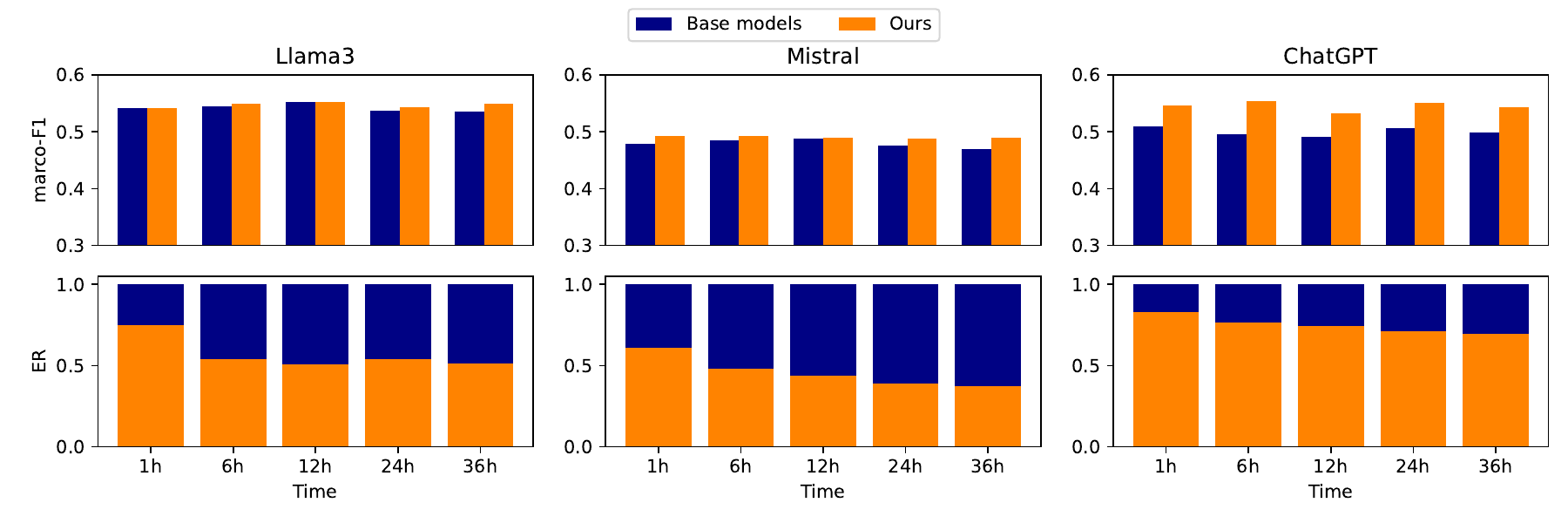}
    \vspace{-5ex}
\caption{Comparison of base models and ours on the EARD-oriented dataset BEARD. The x-axis represents different time intervals of posts used for prediction. The top row shows macro-F1 score, while the bottom row shows ER.}
    \label{fig: cttoff across E}
\end{figure*}

\begin{table*}[h]
\centering

\begin{tabular}{ccc|cc|cc|cc}
\toprule[1.0pt]
\multirow{2}{*}{\textbf{Shots}} 
& \multicolumn{2}{c|}{\textbf{PHEME}} 
& \multicolumn{2}{c|}{\textbf{Twitter}} 
& \multicolumn{2}{c|}{\textbf{BEARD}} 
& \multicolumn{2}{c}{\textbf{Twitter-COVID-19}}\\
\cline{2-9}
& \textbf{macro-F1} & \textbf{ER} & \textbf{macro-F1} & \textbf{ER} & \textbf{macro-F1} & \textbf{ER} & \textbf{macro-F1} & \textbf{ER} \\
\midrule[0.5pt]
$10$ 
& $0.658$\textsubscript{0.019} & $0.244$\textsubscript{0.079} 
& $0.675$\textsubscript{0.052} & $0.387$\textsubscript{0.433} 
& $0.526$\textsubscript{0.020} & $0.263$\textsubscript{0.258} 
& $0.586$\textsubscript{0.036} & $0.556$\textsubscript{0.362} \\
\midrule[0.5pt]
$20$ 
& $0.653$\textsubscript{0.012} & $0.191$\textsubscript{0.038} 
& $0.683$\textsubscript{0.033} & $0.283$\textsubscript{0.316} 
& $0.545$\textsubscript{0.041} & $0.437$\textsubscript{0.272} 
& $0.594$\textsubscript{0.033} & $0.406$\textsubscript{0.361} \\
\midrule[0.5pt]
$30$ 
& $0.653$\textsubscript{0.013} & $0.189$\textsubscript{0.037} 
& $0.670$\textsubscript{0.038} & $0.394$\textsubscript{0.323} 
& $0.529$\textsubscript{0.031} & $0.257$\textsubscript{0.263} 
& $0.583$\textsubscript{0.016} & $0.625$\textsubscript{0.292} \\
\midrule[0.5pt]
$40$ 
& $0.658$\textsubscript{0.016} & $0.347$\textsubscript{0.278} 
& $0.661$\textsubscript{0.042} & $0.230$\textsubscript{0.351} 
& $0.537$\textsubscript{0.045} & $0.413$\textsubscript{0.299} 
& $0.601$\textsubscript{0.014} & $0.573$\textsubscript{0.334} \\
\bottomrule[1.0pt]
\end{tabular}
\caption{Evaluation results of ChatGPT integrated with our framework under varying numbers of training shots across all datasets.}
\label{exp:shots}
\end{table*}

\subsection{Additional Results} 

\subsubsection{Comparison with Base Models} \label{appendix:Preset Time Checkpoints}
We show the comparison with various LLMs using Preset Time Checkpoints Strategy on the BEARD dataset, which is tailored for EARD task by including as many early-stage posts as possible. As shown in Figure \ref{fig: cttoff across E}, our method consistently outperforms various LLMs at all time checkpoints and achieves a much lower ER. This confirms the advantage of our agent's automatic early time point determination over relying on preset checkpoints that can lead to delayed detection and suboptimal performance.

\subsection{Results under varying numbers of training shots.} \label{appendix:shots}
Table~\ref{exp:shots} presents detailed results for each dataset under different numbers of training shots.

\end{document}